\documentclass[11pt]{article}

\usepackage[final]{acl}

    \PassOptionsToPackage{numbers, compress}{natbib}

\usepackage{times}
\usepackage{latexsym}
\usepackage{bbm}

\usepackage[most]{tcolorbox}
\usepackage{enumitem}
\usepackage{inconsolata} 

\usepackage[T1]{fontenc}

\newcommand{\be}{\begin{eqnarray}}
\newcommand{\ee}{\end{eqnarray}}
\newcommand{\bi}{\begin{enumerate}}
\newcommand{\ei}{\end{enumerate}}

\usepackage[utf8]{inputenc}

\usepackage{microtype}

\usepackage{inconsolata}

\usepackage[most]{tcolorbox}
\usepackage{enumitem}

\newtcolorbox{theoremstmtbox}{
    colback=gray!10,
    colframe=gray!50,
    boxrule=0.4pt,
    arc=2pt,
    left=6pt,
    right=6pt,
    top=6pt,
    bottom=6pt,
    width=\linewidth
}

\usepackage{graphicx}
\usepackage{subcaption} 

\usepackage{url}            
\usepackage{booktabs}       
\usepackage{amsfonts}       
\usepackage{nicefrac}       
\usepackage{microtype}      
\usepackage{xcolor}         
\usepackage{enumitem}

\usepackage[table,xcdraw]{xcolor}

\usepackage[utf8]{inputenc} 
\usepackage[T1]{fontenc}    
\usepackage{hyperref}       
\usepackage{url}            
\usepackage{amsfonts}       
\usepackage{nicefrac}       
\usepackage{microtype}      
\usepackage{xcolor}         
\usepackage{makecell}
\usepackage{multirow}
\usepackage[table,xcdraw]{xcolor}
\usepackage[normalem]{ulem}
\useunder{\uline}{\ul}{}

\usepackage{microtype}
\usepackage{graphicx,enumitem}

\usepackage{array}

\usepackage{graphicx,enumitem}
\usepackage{multirow}
\usepackage{amssymb}
\usepackage{mathtools}
\usepackage{amsthm}

\usepackage{multirow,enumitem}
\usepackage{amssymb} 
\usepackage{tikz}
\usepackage{graphicx}
\usepackage{microtype}
\usepackage{multirow}
\usepackage{amssymb}
\usepackage{mathtools}
\usepackage{amsmath}
\usepackage{amsthm}
\usepackage{arydshln}
\usepackage{pifont}
\definecolor{markChanges}{rgb}{1,0,1}

\usepackage[most]{tcolorbox}  

\usepackage{hyperref}

\usepackage{algorithm}
\usepackage{algpseudocode}

\usepackage{breakurl}
\usepackage{textcomp}
\usepackage{stfloats}
\usepackage{url}
\usepackage{verbatim}
\usepackage{graphicx}
\usepackage{dsfont}

\usepackage{wrapfig}
\usepackage[dvipsnames]{xcolor}

\usepackage{dsfont}
\usepackage{pgf}
\usepackage{pgfplots}
\pgfplotsset{compat=1.18}
\usepackage{colortbl}
\usepackage{xcolor}
\usepackage{pgfplotstable}  
\usepackage{booktabs}

\usepackage{fontawesome5}

\definecolor{skyblue}{rgb}{0.529, 0.808, 0.980}
\definecolor{deepyellow}{rgb}{1.0, 0.8, 0.0}

\renewcommand{\arraystretch}{1.3}
\theoremstyle{plain}
\newtheorem{theorem}{Theorem}[section]
\newtheorem{proposition}[theorem]{Proposition}

\theoremstyle{definition}

\theoremstyle{remark}

\title{\textcolor{black}{BiG-SURE - Bipartite Graph for Semantic Uncertainty and Reliability Estimation of LLMs}}

\author{
 \textbf{Debarpan Bhattacharya\textsuperscript{*1}},
 \textbf{Malay Phadke\textsuperscript{*1}},
 \textbf{Sriram Ganapathy\textsuperscript{1,2}}
\\
 \textsuperscript{1}LEAP Lab, Electrical Engineering, Indian Institute of Science\\
 \textsuperscript{2}TANUH AI Centre of Excellence, Indian Institute of Science
\\
 \small{
   \textbf{Correspondence:} \href{mailto:email@domain}{\{debarpanb, malaymilindp, sriramg\}@iisc.ac.in}
 }
\AND
\normalfont\small
\href{https://iiscleap.github.io/projects/BiG-SURE}{\faGlobe\ \texttt{\textcolor{black}{https://iiscleap.github.io/projects/BiG-SURE}}}
\quad\textbar\quad
\href{https://github.com/iiscleap/BiG-SURE}{\faGithub\ \texttt{\textcolor{black}{https://github.com/iiscleap/BiG-SURE}}}
}

\begin{document}
\maketitle
\def\thefootnote{*}
\footnotetext{Equal contribution.}
\def\thefootnote{\arabic{footnote}}
\begin{abstract}
\textcolor{black}{
Reliable uncertainty estimation is a crucial requirement for deploying large language models (LLMs) and vision-language models (VLMs) in safety-critical settings, especially when the model parameters are not accessible (black-box). We propose \textsc{BiG-SURE}, an uncertainty estimator based on cross-temperature semantic agreement. The method samples low-temperature responses as stable semantic anchors and high-temperature responses as probes under meaning-preserving input transformations. It then constructs an anchor-probe \textbf{Bi}partite \textbf{G}raph (\textbf{BiG}) using NLI-based entailment scores and defines confidence through the normalized squared spectral energy of this matrix, with uncertainty given by its complement. \textcolor{black}{This bipartite graph-based \textbf{S}emantic \textbf{U}ncertainty and \textbf{R}eliability \textbf{E}stimation (\textbf{SURE})} score measures whether high-temperature probes remain semantically aligned with the model's stable low-temperature belief or not. We evaluate \textsc{BiG-SURE} on text QA, multilingual QA, and multimodal QA tasks across multiple model families. In these experiments, \textsc{BiG-SURE} improves average abstention AUROC over prior black-box uncertainty estimators, while remaining simple, unsupervised, and applicable to black-box model settings.
}
\end{abstract}

\section{Introduction}
The development of large language models (LLMs)-  ChatGPT~\cite{achiam2023gpt}, Gemini~\cite{team2023gemini}, Llama~\cite{touvron2023llama}, Mistral~\cite{jiang2023mistral7b}, Qwen~\cite{qwen2025qwen25technicalreport} and others -  has advanced various natural language tasks. However, multimodal and multilingual domains still underperform the English textual counterparts, affected by hallucinations and  mispredictions~\cite{wang2025lost, song2025bridge}. 
Significant efforts in alignment, grounding, and training on large-scale data are underway to address this gap.
However, in safety-critical scenarios, like finance, legal and health, the mis-predictions and hallucinations of the LLMs can result in substantial loss and hence, calibration and uncertainty measures are  the key bottlenecks that dictate  LLM deployment readiness~\cite{bedi2025testing}.
\begin{figure}[t!]
    \centering
    \includegraphics[width=0.92\linewidth, height=0.65\textwidth]{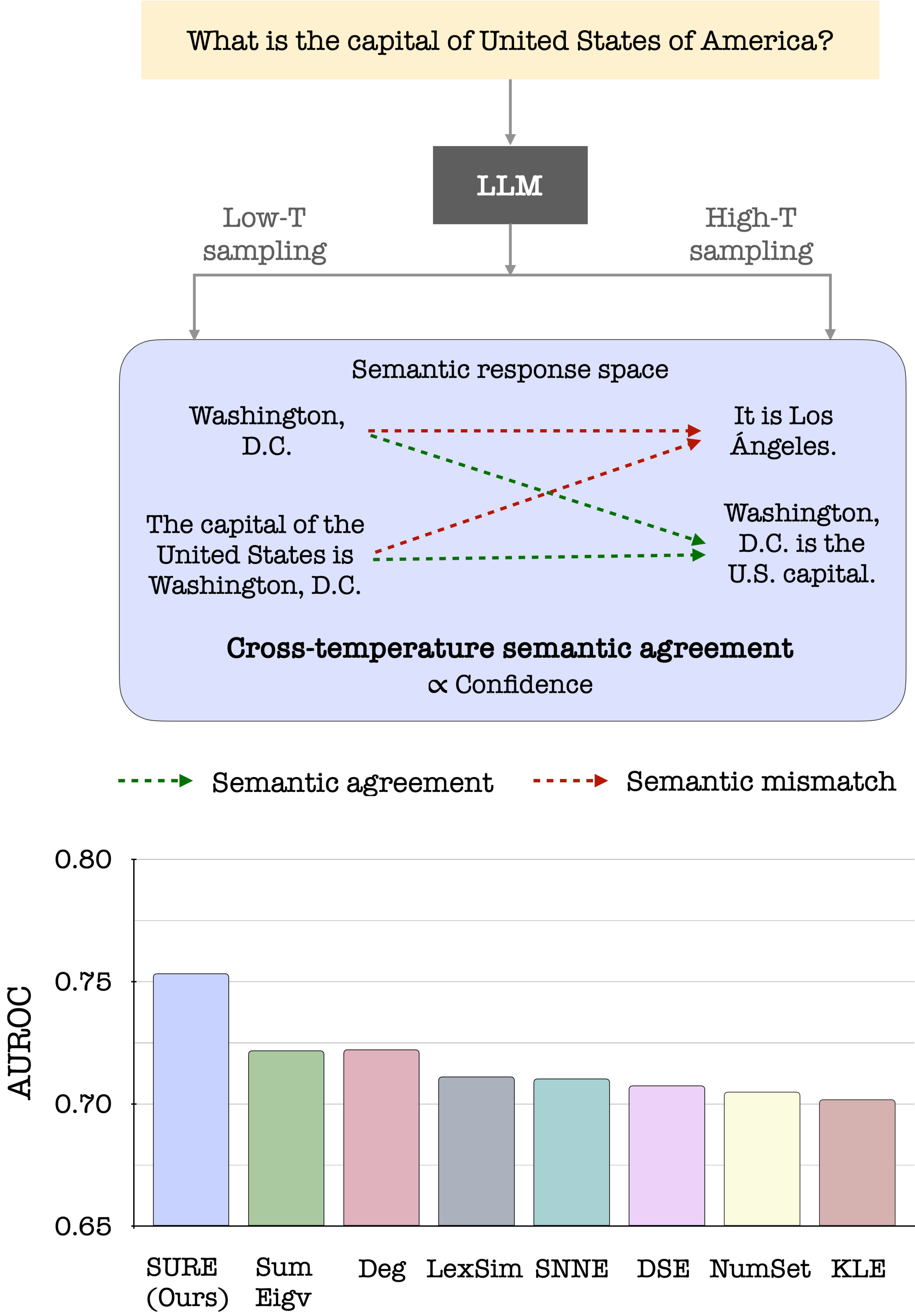}
    \caption{(Top-panel) Bipartite semantic agreement between cross-temperature responses for an example input. (Bottom-panel) SURE gives a substantial improvement in abstention AUROC over prior works on Trivia-QA. The AUROCs are averaged over $6$ LLMs (Table~\ref{tab:main_auc_all_combined}).}
    \vspace{-0.1in}
    \label{fig:intro}
\end{figure}

The early black-box approaches for uncertainty estimation focus on the semantic clustering and  entropy~\cite{kuhn2023semantic, farquhar2024detecting}. Further improvements are achieved using Von-Neumann entropy on kernelized response similarities~\cite{nikitin2024kernel}, that are devoid of the semantic clustering~\cite{nguyen2025beyond}. 
The graph-based approaches consider the responses to be different nodes of a graph and determine the effective number of semantic clusters~\cite{lin2024generating}. 
However, all the existing approaches attempt to extract the signal of uncertainty from the responses sampled at a fixed temperature.

We propose \textsc{\textbf{BiG-SURE}}, a black-box uncertainty estimation framework based on 
\emph{\textbf{cross-temperature bipartite semantic agreement}}. Low-temperature responses are used 
as stable \emph{\textbf{anchors}}, while high-temperature responses act as  \emph{\textbf{probes}} that reveal semantic 
drift. SURE constructs an anchor--probe bipartite semantic similarity graph and uses its normalized 
squared spectral energy as a confidence score, with uncertainty defined as its complement.
The key contributions of this work are:
\begin{itemize}[leftmargin=*, itemsep=0pt, topsep=0pt]
    \item We identify \textbf{cross-temperature response dynamics} as a useful black-box signal for uncertainty estimation in LLMs and VLMs.
    \item We propose a \textbf{bipartite anchor--probe formulation} that compares stable low-temperature generations against diverse high-temperature probes.
    \item We define SURE as a \textbf{normalized squared spectral-energy score over the anchor--probe similarity matrix, producing a bounded uncertainty estimate}.
    \item We benchmark SURE across \textbf{text-only, multilingual, and multimodal QA tasks}, showing improved abstention AUROC over prior black-box uncertainty estimators.
\end{itemize}
Figure~\ref{fig:intro} illustrates the intuition behind SURE. We sample low-temperature responses as stable \textit{anchors} and high-temperature responses as diverse \textit{probes}, then measure their bipartite semantic agreement. Reliable predictions exhibit strong cross-temperature agreement: the probes remain semantically aligned with the anchors despite surface-level variation. Uncertain or hallucinated predictions instead show semantic drift, weakening the anchor--probe agreement. SURE converts this cross-temperature agreement structure into a normalized confidence score, whose complement serves as uncertainty; the bottom panel of Figure~\ref{fig:intro} shows that this signal improves abstention performance over prior estimators.
\section{Problem Statement}
\label{sec:prob-statement}

We consider black-box uncertainty estimation for an LLM parameterized by $\theta$, where internal states, logits, and token probabilities are not accessible. Given an input $x$, the goal is to estimate an uncertainty score $\mathcal{U}(x;\theta)$ for the model's generated response $y$.

Let $T>0$ denote the sampling temperature used to generate multiple responses from the model. We define two response sets: a set of low-temperature \textit{anchor} responses 
$\mathcal{A}=\{a_1,\dots,a_M\}$, and a set of high-temperature \textit{probe} responses 
$\mathcal{H}=\{h_1,\dots,h_N\}$. The probes may be sampled either from the original input $x$ or from semantically equivalent input augmentations of $x$, such as paraphrases. Our objective is to estimate $\mathcal{U}(x;\theta)$ from the cross-temperature response sets $(\mathcal{A},\mathcal{H})$.
\section{Cross-temperature Spectral Energy}
\subsection{Definition}
We define a weighted Bipartite Graph (BiG) $G = (\mathcal{A}, \mathcal{H}, W)$ characterized by the Bi-adjacency Matrix $W \in \mathbb{R}^{M \times N}$, where   $W_{ij}$ represents the semantic similarity between anchor $a_i$ and probe $h_j$:
\begin{equation}
\label{eqn:sem-sim}
    W_{ij} = \texttt{sem-sim}(a_i, h_j), \quad W_{ij} \geq 0
\end{equation}
 We hypothesize that the cross-temperature dynamics between anchor and probe responses capture the model uncertainty. \textcolor{black}{We define the spectral energy of their agreements using $W$ as},
\begin{equation}
\label{eqn:sum_singular_values}
    E_{\texttt{SURE}}(W;\theta) = \sum_{i=1}^{min\{M,N\}}\sigma_i^2 = \sum_{i=1}^{M}\sum_{j=1}^{N}W_{ij}^2
\end{equation}
where $\sigma_i$ are singular values of $W$. The equality shows that $E_{\texttt{SURE}}(W;\theta)$ is same as the squared Frobenius norm of $W$. \textcolor{black}{
We define normalized confidence and compute the Semantic Uncertainty and Reliability Estimate (SURE) score as},
\begin{subequations}
\label{eqn:definition_unorm}
\begin{align}
    E_{\texttt{norm}}(W;\theta) &= \frac{E_{\texttt{SURE}}(W;\theta)}{MN} \\
    \mathcal{U}_{\texttt{norm}}(W;\theta) &= 1 - E_{\texttt{norm}}(W;\theta)
\end{align}
\end{subequations}
Mathematically, the measure satisfies  $0\leq \mathcal{U}_{\texttt{norm}}(W;\theta)\leq 1$ (Appendix~\ref{appendix:confidence_norm}). We formally show that SURE also possesses the properties that make it a valid uncertainty measure (Section~\ref{subsec:sure_axioms}).
\subsection{Illustration}
As shown in Figure~\ref{fig:methodology}, when an LLM generates an accurate response to a factual question (in greedy decoding mode ($T =0$)),  the mutual relationship between low and high temperature responses exhibits different properties compared to the setting when the LLM generates an inaccurate response.
 When it generates an accurate response, the low and high-temperature responses are semantically similar, which indicates a low value of $\mathcal{U}_{\texttt{norm}}$. On the other hand, when the LLM generates an incorrect response, the high-temperature responses vary significantly from the low-temperature responses, resulting in a high value of $\mathcal{U}_{\texttt{norm}}$.
 \begin{figure*}[t!]
    \centering
    \includegraphics[width=0.95\linewidth, height=0.46\textwidth]{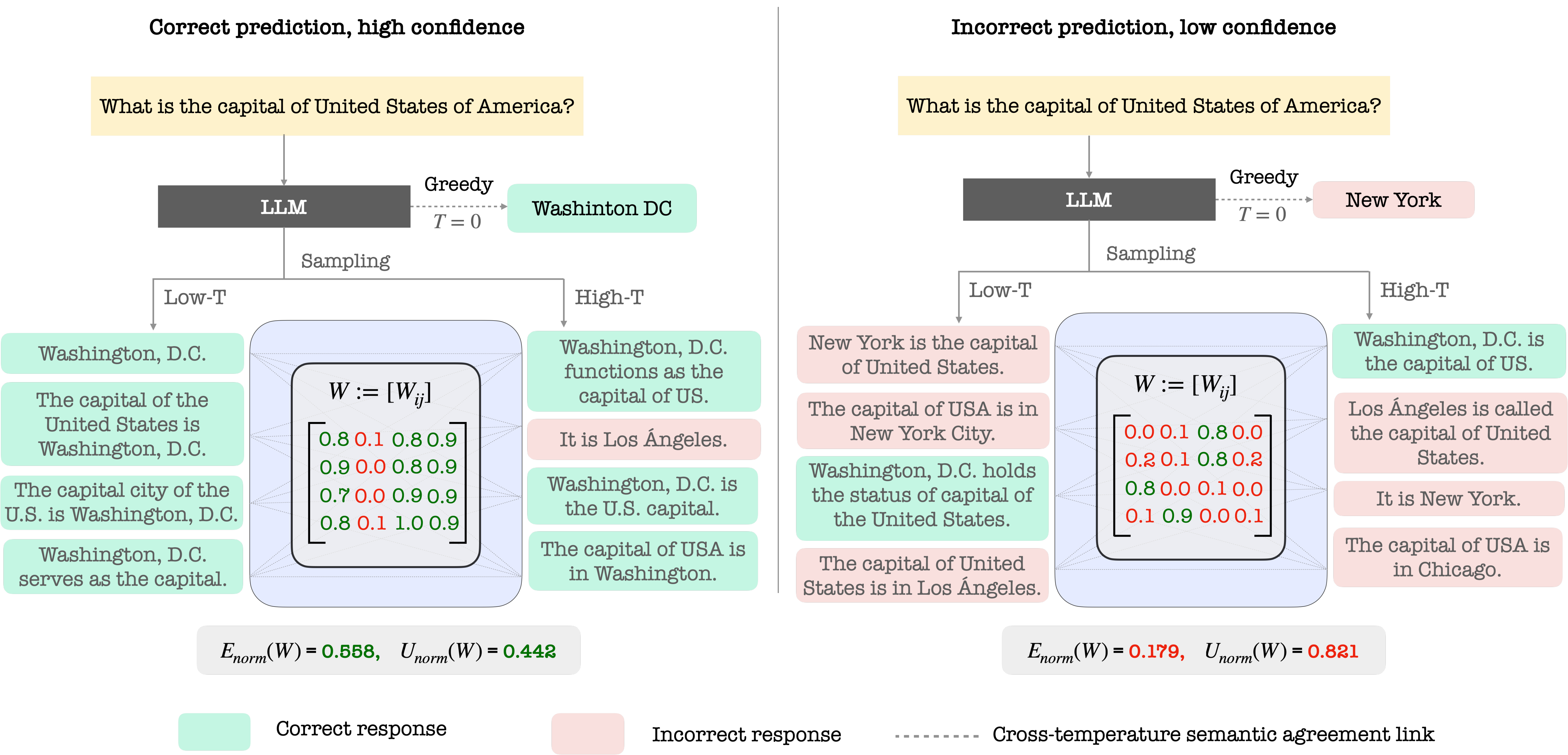}
    \caption{\textbf{Cross-temperature semantic agreement reflects prediction reliability.} 
For a correct greedy prediction (left), low- and high-temperature samples are semantically consistent, resulting in strong bipartite connectivity and high normalized spectral energy, and vice-versa.}
    \vspace{-0.1in}
    \label{fig:methodology}
\end{figure*}
 \subsection{Implementation}
Given an input prompt $x$, we sample $M$ low-temperature responses at $T_L$ and $N$ high-temperature responses at $T_H$,  with $T_H>T_L$.  
We construct a bipartite graph between cross-temperature samples, where the edge weights are based on the pairwise semantic similarity scores, and form the matrix $W\in\mathbb{R}^{M\times N}$.
We compute the SURE metric based on Eqn.~\ref{eqn:definition_unorm}. Note that the computation of uncertainty is unsupervised and does not require ground truth labels.
The algorithmic implementation is shown in Algorithm~\ref{alg:ctse}.
\begin{algorithm}[t]
\caption{Computation of $\mathcal{U}_{norm}$}
\label{alg:ctse}
\begin{algorithmic}[1]
\Require Input prompt $x$; LLM $\theta$; low temperature $T_L$; high temperature $T_H>T_L$;
\Require \# of low-temperature responses $M$; \# of high-temperature responses $N$; semantic similarity operator $\texttt{sem-sim}(\cdot,\cdot)$.
\Ensure Normalized uncertainty $\mathcal{U}_{\texttt{norm}}(W;\theta)\in[0,1]$.

\State $\mathcal A \gets \textsc{Sample}(\theta, x, T_L, M)$ 
\State $\mathcal H \gets \textsc{Sample}(\theta, x, T_H, N)$ 
\State Initialize $W \in \mathbb{R}^{M\times N}$

\For{$i \gets 1$ to $M$, $a_i\in \mathcal{A}$}
  \For{$j \gets 1$ to $N$, $h_j\in \mathcal{H}$}
    \State $W_{ij} \gets \texttt{sem-sim}(a_i, h_j)$
  \EndFor
\EndFor

\State ${E}_{\texttt{SURE}}(W;\theta) \gets \sum_{i=1}^{\min\{M,N\}} \sigma_i(W)^2$
\State $E_{\texttt{norm}}(W;\theta) \gets \dfrac{E_{\texttt{SURE}}(W;\theta)}{MN}$
\State $\mathcal{U}_{\texttt{norm}}(W;\theta) \gets 1 - E_{\mathrm{norm}}(W;\theta)$
\State \Return $\mathcal{U}_{\texttt{norm}}(W;\theta)$
\end{algorithmic}
\end{algorithm}
\subsection{SURE -  A valid uncertainty estimator}
\label{subsec:sure_axioms}
\noindent We show that $\mathcal{U}_{\texttt{norm}}(W;\theta)$ is a valid estimator of semantic uncertainty for LLMs:
\begin{itemize}[leftmargin=*, itemsep=0pt, topsep=0pt]
\item \textbf{Semantic identity:} $\mathcal{U}_{\texttt{norm}} = 0$ iff all the sampled responses of an LLM are semantically similar.
\item \textbf{Semantic disjointness:} $\mathcal{U}_{\texttt{norm}} = 1$ iff all the high temperature responses are semantically different from low temperature responses.
\item \textbf{Monotonicity:} For a given number of sampling responses, $\mathcal{U}_{\texttt{norm}}$ increases if the number of semantically unique sequences increases in the prediction set. These properties are discussed in detail in Appendix~\ref{appn:sure-valid-estimator}.
\end{itemize}


\section{Prior Works}
Several prior works attempt black-box uncertainty estimation for LLMs, like semantic entropy~\cite{kuhn2023semantic,farquhar2024detecting}, evidential semantic entropy~\cite{kunitomo2026evidential}, SPUQ~\cite{gao-etal-2024-spuq}, KLE~\cite{nikitin2024kernel}, SNNE~\cite{nguyen2025beyond}, graph-based~\cite{lin2024generating} and distillation-based~\cite{pmlr-v337-phillips26b} approaches. Only a handful of recent works attempt uncertainty estimation for multimodal LLMs; some of them are FESTA~\cite{bhattacharya2025festa}, TREA~\cite{bhattacharya25b_interspeech}, Uncertainty-O~\cite{zhang2025uncertainty} and UMPIRE~\cite{lau2025uncertainty}.

\noindent \textbf{Discrete Semantic Entropy (DSE)}:
Directly computing lexical entropy~\cite{kadavath2022language} is not applicable for LLM-generated sequences, as the lexical variability can obscure semantic uncertainty. Discrete Semantic Entropy (DSE) addresses this issue by performing semantic clustering and computing entropy over the clusters~\cite{kuhn2023semantic, farquhar2024detecting}.
In contrast, our work does not require semantic clustering.\\
\noindent \textbf{Graph-based approaches}:
Another family of methods explores high-temperature sampled outputs as nodes in a graph, with edges defined by soft pairwise similarity. Various graph-derived uncertainty scores include \texttt{NumSet} (discretized similarity), \texttt{EigV} (largest eigenvalue of a degree-based matrix), and \texttt{Deg} (negative trace of the degree matrix), as studied in~\cite{lin2024generating}.
\\
\noindent \textbf{Lexical Similarity (LexSim)}:
LexSim computes a pairwise similarity matrix using a lexical overlap metric such as ROUGE-L~\cite{lin2004rouge}, and computes confidence as a normalized aggregate of off-diagonal similarities~\cite{fomicheva2020unsupervised}.\\
\noindent \textbf{Kernel Language Entropy (KLE)}:
KLE constructs unit-trace semantic kernels capturing pairwise similarity between generations, and defines uncertainty via the von Neumann entropy of the kernel matrix~\cite{nikitin2024kernel}.\\
\noindent \textbf{Semantic Nearest Neighbour Entropy (SNNE)}:
SNNE~\cite{nguyen2025beyond} avoids explicit clustering by operating directly on pairwise similarity of sampled responses. SNNE and KLE solely rely on high-temperature samples.\\
\noindent Table~\ref{tab:method_contrast_singlecol} highlights the main distinction between prior black-box uncertainty estimators and SURE. Prior approaches largely rely on fixed-temperature sampling responses and then derive uncertainty from them. SURE departs from this paradigm by using agreement dynamics between low-temperature anchors and high-temperature probes to quantify uncertainty. SURE also incorporates semantically equivalent input augmentation within the same framework. To the best of our knowledge, SURE is the first attempt to leverage cross-temperature sampling dynamics in LLM responses for uncertainty quantification.
\begin{table}[t]
\centering
\caption{Conceptual comparison of SURE with prior works on black-box, open-text uncertainty estimation.}
\label{tab:method_contrast_singlecol}
\setlength{\tabcolsep}{3.5pt}
\small


\setlength{\aboverulesep}{0pt}
\setlength{\belowrulesep}{0pt}
\setlength{\heavyrulewidth}{1.1pt}
\setlength{\lightrulewidth}{0.5pt}

\resizebox{0.9\linewidth}{!}{%
\renewcommand{\arraystretch}{1.2}
\begin{tabular}{
>{\raggedright\arraybackslash}m{1.3cm}
>{\raggedright\arraybackslash}m{2.4cm}
>{\centering\arraybackslash}m{0.9cm}
>{\raggedright\arraybackslash}m{2.6cm}
}
\specialrule{1.1pt}{0pt}{1pt}
\specialrule{0.5pt}{0pt}{0.8pt}

\textbf{Method} &
\textbf{Sampling type} &
\textbf{Input Aug.} &
\textbf{Quantification} \\

\midrule[0.7pt]

\textbf{DSE} &
High-temperature sampling (fixed $T$) &
No &
Semantic entropy over clustered responses \\

\textbf{Graph-based} &
High-temperature sampling (fixed $T$) &
No &
Graph statistics (e.g., degree, eigenvalue, number of sets) \\

\textbf{LexSim} &
High-temperature sampling (fixed $T$) &
No &
Lexical similarity across sampled responses \\

\textbf{KLE} &
High-temperature sampling (fixed $T$) &
No &
Von Neumann entropy of semantic kernel \\

\textbf{SNNE} &
High-temperature sampling (fixed $T$) &
No &
Nearest-neighbour semantic entropy \\

\rowcolor{gray!12}
\textbf{SURE} &
Low-$T$ anchors + high-$T$ probes &
Yes &
Cross-temperature agreement \\

\specialrule{0.5pt}{0.8pt}{0pt}
\specialrule{1.1pt}{0pt}{0pt}
\end{tabular}%
}
\vspace{-0.1in}
\end{table}
\section{Experimental Setup}\label{sec:experimental-setup}
\subsection{Tasks and models}
\paragraph{Textual QA:}
We experiment with text-based factual question answering and math word question answering tasks. We use Trivia-QA~\cite{joshi-etal-2017-triviaqa} and SVAMP~\cite{patel-etal-2021-nlp} datasets. We  use $6$ LLMs for these datasets - Llama-3.1-8B-Instruct and Llama-3.3-70B-Instruct ~\cite{grattafiori2024llama}, Qwen-2.5-7B-Instruct and Qwen-2.5-72B-Instruct ~\cite{qwen2025qwen25technicalreport}, Gemma-3-12b-it  and 
Gemma-3-27b-it ~\cite{gemmateam2025gemma3technicalreport}.
\paragraph{Multilingual QA:}
We experiment with multilingual SciQ ~\cite{xue-etal-2025-mlingconf} in 4 languages as English\texttt{(en)}, French\texttt{(fr)}, Japanese\texttt{(ja)} and Mandarin\texttt{(zh)}. We use two LLMs - 
Aya-expanse-8b ~\cite{dang2024ayaexpansecombiningresearch} and 
Apertus-8B-Instruct-2509~\cite{swissai2025apertus}, because of their explicit multilingual training and superior performance.
\paragraph{Multimodal QA:}
We use the OK-VQA dataset ~\cite{okvqa} for visual question-answering tasks. We use vision-language models (VLMs) - Pixtral-12B-2409 ~\cite{agrawal2024pixtral12b}, Llava-v1.6-mistral-7b-hf ~\cite{liu2023improved}  and Qwen3-VL-8B-Instruct ~\cite{bai2025qwen3vltechnicalreport}.
\subsection{Sampling and input transformations}
 \textcolor{black}{Most of the prior works on sampling-based uncertainty estimation use temperature sampling. Input transformations that do not change input semantics have also been proven to be effective ~\cite{gao-etal-2024-spuq}. SURE uses semantics-preserving input transformations in addition to temperature sampling to obtain probe responses.
 For the original input $x$,  $M$ anchor responses are obtained by low-temperature ($T_L$) sampling of $x$. $N$ probe samples are generated by high-temperature ($T_H$) sampling of a mix of equivalent input transformations, $\{x^e\}$.
 In our experiments, $M=3$, $N=10$, $T_L=0.1$ and $T_H=1.0$ are used across all models and tasks. The $N=10$ probe samples are obtained from $5$ input transformations and high-temperature sampling twice from each.
 The equivalent samples for the textual inputs correspond to paraphrasing of the input. For multimodal inputs, we use both question paraphrases and mild image perturbations: mean blur with kernel size $5$, affine rotation by $\pm 10^\circ$, and Gaussian pixel noise sampled from $\mathcal{N}(0,10)$ followed by clipping to $[0,255]$.
  The exact prompts and implementation details of these input transformations are in Appendix~\ref{app:input_aug_prompts}.
}

\subsection{\textcolor{black}{Semantic similarity backend module}}\label{sec:entailment_module}

\textcolor{black}{
The semantic similarity function $\mathrm{sem\text{-}sim}(a_i,h_j)$ in Equation~\ref{eqn:sem-sim} is implemented using NLI-based entailment module. We feed the anchor-probe pair $(a_i,h_j)$ to the entailment model and obtain the softmax over the logits. The corresponding entailment probability forms the edge weight:
}
\[
\textcolor{black}{
W_{ij}
=
\mathrm{sem\text{-}sim}(a_i,h_j)
=
p_{\mathrm{entail}}(a_i,h_j)
\in [0,1],
}
\]
\textcolor{black}{
where larger values indicate stronger semantic agreement between the anchors and probes.
For text-QA and multimodal-QA, we use a DeBERTa-large NLI model~\cite{he2021deberta}~\footnote{\url{https://huggingface.co/microsoft/deberta-large-mnli}} as the entailment backend. For multilingual QA, we use a multilingual DeBERTa NLI model (mDeBERTa~\footnote{\url{https://huggingface.co/MoritzLaurer/mDeBERTa-v3-base-xnli-multilingual-nli-2mil7}}), which is
trained on multilingual NLI data. We do not train or fine-tune
either semantic-similarity model. 
 Within each experimental setting, the entailment backend remains fixed across SURE and all similarity-based baselines for fair comparison.
}
\subsection{\textcolor{black}{Evaluation and performance metric}}
We use the area under the receiver operating characteristic curve (AUROC) to evaluate the uncertainty estimators. Each uncertainty score is interpreted as an \emph{abstention score}. First, every test prediction is labeled as correct or incorrect based on the match between the greedy prediction ($T=0$ response) and the ground-truth answer. \textcolor{black}{The matching is based on the standard evaluation protocols for textual QA and multimodal QA tasks, and we use \texttt{Gemini-2.5-Flash}~\cite{comanici2025gemini} as a judge for multilingual QA (details in Appendix~\ref{appendix:evaluation})}. Then, the binary matching labels and abstention scores are used to compute the AUROC.
\section{Results}
\subsection{Illustrative examples }
Figure~\ref{fig:examples_all} provides examples of cross-temperature response patterns captured by SURE. Across multilingual and multimodal QA examples, we observe a clear separation between reliable and unreliable predictions in terms of cross-temperature semantic agreement dynamics.
 \begin{figure*}[t!]
    \centering
    \includegraphics[width=1.0\linewidth, height=0.44\textwidth]{figures/examples_v2.pdf}
    \caption{\textbf{Illustrative examples of cross-temperature semantic agreement in multilingual and multimodal QA.} For each example, we show the greedy prediction together with low-temperature anchor samples and high-temperature probe samples.  Responses aligned with the ground truth are marked in \textcolor{green}{green}, while responses that do not align are highlighted in \textcolor{red}{red}. Reliable predictions exhibit strong semantic agreement across temperatures, while unreliable predictions show semantic drift and disagreement.}
    \label{fig:examples_all}
\end{figure*}
The quantitative evaluation of SURE in terms of abstention AUROC and a comparison with $7$ existing works on black-box, open-text uncertainty quantification methods across multilingual and multimodal tasks is reported in Table~\ref{tab:main_auc_all_combined}. All prior works use $N=10$ in Table~\ref{tab:main_auc_all_combined}.
\subsection{Results on text-only QA tasks}
\textcolor{black}{
On the text-only QA benchmarks, SURE achieves the best average AUROC on both datasets. On \textsc{Trivia-QA}, SURE obtains an average AUROC of $0.753 \pm 0.017$, improving over the strongest baseline average, \texttt{Deg} ($0.722 \pm 0.007$). On \textsc{SVAMP}, SURE obtains $0.877 \pm 0.022$, outperforming the strongest baseline average, \texttt{SNNE} ($0.846 \pm 0.018$).}\\
\noindent \textcolor{black}{The gains are also consistent across model families and scales. On \textsc{Trivia-QA}, SURE achieves the best performance for $5$ out of $6$ models, with the only exception being Llama-70B, where \texttt{Deg} performs better. On \textsc{SVAMP}, SURE achieves the best results for $5$ out of $6$ models, with Qwen-72B being the exception where \texttt{SumEigv} is strongest. These results indicate that cross-temperature anchor--probe agreement provides a robust uncertainty signal across both factual QA and math word-problem settings. Graph-based baselines such as \texttt{SumEigv} and \texttt{Deg} are competitive, especially on \textsc{Trivia-QA}, while \texttt{SNNE} is a competitive baseline on \textsc{SVAMP}; however, SURE achieves the best average performance across both. To understand the corresponding abstention behavior, AURC values are also reported in Appendix~\ref{appendix:aurc-results}.}
\subsection{Results on multilingual QA tasks}
\textcolor{black}{
On multilingual \textsc{SciQ}, SURE performs well across languages and LLMs, and achieves the best average AUROC on English ($0.677 \pm 0.015$), Japanese ($0.685 \pm 0.013$), and Mandarin ($0.691 \pm 0.012$). These results improve over the best baseline systems in  three language settings.
French \textsc{SciQ} is the only exception, where \texttt{LexSim} obtains the best average AUROC.  Overall, SURE achieves the best average performance in $3$ out of $4$ multilingual language groups, and is especially consistent on Japanese and Mandarin for both Aya and Apertus.
}
\subsection{Results on multimodal QA tasks}
\textcolor{black}{
On OKVQA, SURE achieves the best average AUROC of $0.752 \pm 0.014$, outperforming the best baseline system, SNNE ($0.735 \pm 0.016$). SURE is also the best method for all three  VLMs. This suggests that cross-temperature semantic agreement remains useful even when the input contains visual modalities.
}\\

\noindent\textbf{Summary.}
\textcolor{black}{Across the seven dataset/language groups in Table~\ref{tab:main_auc_all_combined}, SURE achieves the best average AUROC in six groups: \textsc{Trivia-QA}, \textsc{SVAMP}, \textsc{SciQ} \texttt{en}, \textsc{SciQ} \texttt{ja}, \textsc{SciQ} \texttt{zh}, and \textsc{OKVQA}. Overall, they support the central hypothesis that cross-temperature semantic agreement provides an effective black-box uncertainty signal across text-only, multilingual, and multimodal QA settings.}
\begin{table*}[t]
\centering
\caption{\textcolor{black}{Performance comparison of SURE with prior works in terms of abstention AUROC across text-only, multimodal, and multilingual QA datasets. The third column reports the task accuracy of greedy responses. The abstention AUROCs are presented with repeated-run statistics (mean $\pm$ std) across $5$ seeds. The best result, in terms of mean AUROC, is highlighted in \textbf{bold}, while the second best result is \underline{underlined}.}}
\label{tab:main_auc_all_combined}
\setlength{\tabcolsep}{5pt}

\setlength{\aboverulesep}{0pt}
\setlength{\belowrulesep}{0pt}
\setlength{\heavyrulewidth}{1.2pt}
\setlength{\lightrulewidth}{0.55pt}

\resizebox{1.0\linewidth}{!}{%
\renewcommand{\arraystretch}{1.22}%
\begin{tabular}{l l c *{7}{c} c}

\specialrule{1.2pt}{0pt}{1.2pt}
\specialrule{0.55pt}{0pt}{0.9pt}

\multirow{2}{*}{\textbf{\makecell{Dataset\\{[}lang{]}}}} &
\multirow{2}{*}{\textbf{Model}} &
\multirow{2}{*}{\textbf{Acc.}} &
\multicolumn{7}{c}{\textbf{Baselines ($\uparrow$)}} &
\multirow{2}{*}{\textbf{SURE ($\uparrow$)}} \\[4pt]
\cmidrule(lr){4-10}
& & &
\textbf{DSE} &
\textbf{SumEigv} &
\textbf{Deg} &
\textbf{NumSet} &
\textbf{LexSim} &
\textbf{KLE} &
\textbf{SNNE} &
\\[4pt]

\midrule[0.8pt]

\multirow{7}{*}{\textbf{\makecell{Trivia-QA\\{[}en{]}}}}
& \textbf{Qwen (7B)}   & 0.495 & 0.786 $\pm$ 0.003 & 0.791 $\pm$ 0.005 & \underline{0.793 $\pm$ 0.004} & 0.784 $\pm$ 0.005 & 0.778 $\pm$ 0.008 & 0.773 $\pm$ 0.007 & 0.783 $\pm$ 0.003 & \textbf{0.806 $\pm$ 0.014} \\
& \textbf{Llama (8B)}  & 0.661 & 0.731 $\pm$ 0.005 & 0.730 $\pm$ 0.005 & \underline{0.740 $\pm$ 0.005} & 0.724 $\pm$ 0.007 & 0.736 $\pm$ 0.006 & 0.722 $\pm$ 0.003 & 0.736 $\pm$ 0.006 & \textbf{0.772 $\pm$ 0.021} \\
& \textbf{Gemma (12B)} & 0.643 & 0.698 $\pm$ 0.010 & 0.697 $\pm$ 0.011 & 0.695 $\pm$ 0.011 & 0.700 $\pm$ 0.010 & \underline{0.727 $\pm$ 0.014} & 0.689 $\pm$ 0.008 & 0.713 $\pm$ 0.014 & \textbf{0.789 $\pm$ 0.011} \\
& \textbf{Gemma (27B)} & 0.755 & 0.629 $\pm$ 0.005 & 0.636 $\pm$ 0.005 & 0.634 $\pm$ 0.005 & 0.630 $\pm$ 0.005 & \underline{0.657 $\pm$ 0.005} & 0.629 $\pm$ 0.006 & 0.617 $\pm$ 0.010 & \textbf{0.699 $\pm$ 0.017} \\
& \textbf{Llama (70B)} & 0.767 & 0.712 $\pm$ 0.008 & \underline{0.754 $\pm$ 0.011} & \textbf{0.756 $\pm$ 0.011} & 0.712 $\pm$ 0.008 & 0.737 $\pm$ 0.005 & 0.702 $\pm$ 0.010 & 0.715 $\pm$ 0.007 & 0.715 $\pm$ 0.009 \\
& \textbf{Qwen (72B)}  & 0.690 & 0.689 $\pm$ 0.012 & \underline{0.722 $\pm$ 0.005} & 0.716 $\pm$ 0.007 & 0.680 $\pm$ 0.013 & 0.631 $\pm$ 0.009 & 0.695 $\pm$ 0.005 & 0.698 $\pm$ 0.012 & \textbf{0.740 $\pm$ 0.030} \\
\multicolumn{1}{c}{} & \cellcolor{gray!12}\textbf{Avg.}
& \cellcolor{gray!12}0.668
& \cellcolor{gray!12}0.708 $\pm$ 0.007
& \cellcolor{gray!12}0.722 $\pm$ 0.007
& \cellcolor{gray!12}\underline{0.722 $\pm$ 0.007}
& \cellcolor{gray!12}0.705 $\pm$ 0.008
& \cellcolor{gray!12}0.711 $\pm$ 0.008
& \cellcolor{gray!12}0.702 $\pm$ 0.006
& \cellcolor{gray!12}0.710 $\pm$ 0.009
& \cellcolor{gray!12}\textbf{0.753 $\pm$ 0.017} \\

\midrule[0.8pt]

\multirow{7}{*}{\textbf{\makecell{SVAMP\\{[}en{]}}}}
& \textbf{Qwen (7B)}   & 0.688 & 0.794 $\pm$ 0.027 & 0.790 $\pm$ 0.019 & 0.789 $\pm$ 0.019 & 0.791 $\pm$ 0.027 & 0.780 $\pm$ 0.020 & 0.788 $\pm$ 0.018 & \underline{0.803 $\pm$ 0.029} & \textbf{0.804 $\pm$ 0.017} \\
& \textbf{Llama (8B)}  & 0.672 & 0.857 $\pm$ 0.022 & 0.844 $\pm$ 0.025 & 0.845 $\pm$ 0.025 & 0.855 $\pm$ 0.022 & 0.845 $\pm$ 0.016 & 0.861 $\pm$ 0.019 & \underline{0.888 $\pm$ 0.020} & \textbf{0.890 $\pm$ 0.026} \\
& \textbf{Gemma (12B)} & 0.730 & 0.779 $\pm$ 0.016 & 0.787 $\pm$ 0.012 & 0.790 $\pm$ 0.013 & 0.779 $\pm$ 0.015 & 0.783 $\pm$ 0.013 & 0.781 $\pm$ 0.012 & \underline{0.794 $\pm$ 0.015} & \textbf{0.882 $\pm$ 0.021} \\
& \textbf{Gemma (27B)} & 0.794 & 0.735 $\pm$ 0.014 & 0.751 $\pm$ 0.015 & 0.752 $\pm$ 0.015 & 0.735 $\pm$ 0.014 & 0.734 $\pm$ 0.015 & 0.733 $\pm$ 0.015 & \underline{0.796 $\pm$ 0.016} & \textbf{0.879 $\pm$ 0.017} \\
& \textbf{Llama (70B)} & 0.797 & 0.884 $\pm$ 0.011 & 0.882 $\pm$ 0.011 & 0.884 $\pm$ 0.011 & 0.882 $\pm$ 0.010 & 0.868 $\pm$ 0.009 & 0.881 $\pm$ 0.013 & \underline{0.896 $\pm$ 0.010} & \textbf{0.905 $\pm$ 0.013} \\
& \textbf{Qwen (72B)}  & 0.849 & 0.893 $\pm$ 0.021 & \textbf{0.915 $\pm$ 0.020} & \underline{0.914 $\pm$ 0.018} & 0.880 $\pm$ 0.022 & 0.800 $\pm$ 0.014 & 0.814 $\pm$ 0.027 & 0.897 $\pm$ 0.016 & 0.902 $\pm$ 0.039 \\
\multicolumn{1}{c}{} & \cellcolor{gray!12}\textbf{Avg.}
& \cellcolor{gray!12}0.755
& \cellcolor{gray!12}0.823 $\pm$ 0.018
& \cellcolor{gray!12}0.828 $\pm$ 0.017
& \cellcolor{gray!12}0.829 $\pm$ 0.017
& \cellcolor{gray!12}0.820 $\pm$ 0.018
& \cellcolor{gray!12}0.802 $\pm$ 0.015
& \cellcolor{gray!12}0.810 $\pm$ 0.017
& \cellcolor{gray!12}\underline{0.846 $\pm$ 0.018}
& \cellcolor{gray!12}\textbf{0.877 $\pm$ 0.022} \\

\midrule[0.8pt]

\multirow{3}{*}{\textbf{\makecell{SciQ\\{[}en{]}}}}
& \textbf{Aya} & 0.786 & 0.591 $\pm$ 0.023 & 0.603 $\pm$ 0.040 & \underline{0.613 $\pm$ 0.028} & 0.562 $\pm$ 0.026 & 0.591 $\pm$ 0.012 & 0.580 $\pm$ 0.008 & 0.537 $\pm$ 0.043 & \textbf{0.633 $\pm$ 0.004} \\
& \textbf{Apertus} & 0.763 & 0.666 $\pm$ 0.047 & \underline{0.682 $\pm$ 0.022} & 0.679 $\pm$ 0.014 & 0.603 $\pm$ 0.031 & 0.620 $\pm$ 0.020 & 0.631 $\pm$ 0.009 & 0.584 $\pm$ 0.035 & \textbf{0.722 $\pm$ 0.026} \\
\multicolumn{1}{c}{} & \cellcolor{gray!12}\textbf{Avg.}
& \cellcolor{gray!12}0.775
& \cellcolor{gray!12}0.628 $\pm$ 0.035
& \cellcolor{gray!12}0.642 $\pm$ 0.031
& \cellcolor{gray!12}\underline{0.646 $\pm$ 0.021}
& \cellcolor{gray!12}0.582 $\pm$ 0.028
& \cellcolor{gray!12}0.606 $\pm$ 0.016
& \cellcolor{gray!12}0.606 $\pm$ 0.008
& \cellcolor{gray!12}0.561 $\pm$ 0.039
& \cellcolor{gray!12}\textbf{0.677 $\pm$ 0.015} \\

\midrule[0.8pt]

\multirow{3}{*}{\textbf{\makecell{SciQ\\{[}fr{]}}}}
& \textbf{Aya} & 0.675 & \textbf{0.647 $\pm$ 0.028} & 0.631 $\pm$ 0.017 & 0.639 $\pm$ 0.015 & 0.634 $\pm$ 0.014 & \underline{0.647 $\pm$ 0.013} & 0.611 $\pm$ 0.024 & 0.635 $\pm$ 0.015 & 0.639 $\pm$ 0.015 \\
& \textbf{Apertus} & 0.668 & 0.535 $\pm$ 0.022 & \textbf{0.573 $\pm$ 0.013} & 0.563 $\pm$ 0.017 & 0.528 $\pm$ 0.032 & \underline{0.569 $\pm$ 0.018} & 0.527 $\pm$ 0.032 & 0.534 $\pm$ 0.039 & 0.542 $\pm$ 0.022 \\
\multicolumn{1}{c}{} & \cellcolor{gray!12}\textbf{Avg.}
& \cellcolor{gray!12}0.672
& \cellcolor{gray!12}0.591 $\pm$ 0.025
& \cellcolor{gray!12}\underline{0.602 $\pm$ 0.015}
& \cellcolor{gray!12}0.601 $\pm$ 0.016
& \cellcolor{gray!12}0.581 $\pm$ 0.023
& \cellcolor{gray!12}\textbf{0.608 $\pm$ 0.016}
& \cellcolor{gray!12}0.569 $\pm$ 0.028
& \cellcolor{gray!12}0.585 $\pm$ 0.027
& \cellcolor{gray!12}0.590 $\pm$ 0.019 \\

\midrule[0.8pt]

\multirow{3}{*}{\textbf{\makecell{SciQ\\{[}ja{]}}}}
& \textbf{Aya} & 0.539 & \underline{0.586 $\pm$ 0.017} & 0.564 $\pm$ 0.033 & 0.573 $\pm$ 0.025 & 0.546 $\pm$ 0.027 & 0.508 $\pm$ 0.009 & 0.558 $\pm$ 0.030 & 0.541 $\pm$ 0.030 & \textbf{0.662 $\pm$ 0.010} \\
& \textbf{Apertus} & 0.441 & 0.658 $\pm$ 0.007 & 0.678 $\pm$ 0.025 & \underline{0.690 $\pm$ 0.015} & 0.639 $\pm$ 0.002 & 0.503 $\pm$ 0.014 & 0.678 $\pm$ 0.015 & 0.625 $\pm$ 0.016 & \textbf{0.707 $\pm$ 0.016} \\
\multicolumn{1}{c}{} & \cellcolor{gray!12}\textbf{Avg.}
& \cellcolor{gray!12}0.490
& \cellcolor{gray!12}0.622 $\pm$ 0.012
& \cellcolor{gray!12}0.621 $\pm$ 0.029
& \cellcolor{gray!12}\underline{0.632 $\pm$ 0.020}
& \cellcolor{gray!12}0.593 $\pm$ 0.015
& \cellcolor{gray!12}0.506 $\pm$ 0.011
& \cellcolor{gray!12}0.618 $\pm$ 0.022
& \cellcolor{gray!12}0.583 $\pm$ 0.023
& \cellcolor{gray!12}\textbf{0.685 $\pm$ 0.013} \\

\midrule[0.8pt]

\multirow{3}{*}{\textbf{\makecell{SciQ\\{[}zh{]}}}}
& \textbf{Aya} & 0.556 & \underline{0.602 $\pm$ 0.035} & 0.592 $\pm$ 0.028 & 0.593 $\pm$ 0.031 & 0.553 $\pm$ 0.034 & 0.500 $\pm$ 0.007 & 0.568 $\pm$ 0.021 & 0.552 $\pm$ 0.028 & \textbf{0.665 $\pm$ 0.012} \\
& \textbf{Apertus} & 0.458 & 0.678 $\pm$ 0.017 & 0.689 $\pm$ 0.012 & 0.695 $\pm$ 0.016 & 0.645 $\pm$ 0.011 & 0.503 $\pm$ 0.009 & \underline{0.695 $\pm$ 0.018} & 0.613 $\pm$ 0.016 & \textbf{0.717 $\pm$ 0.012} \\
\multicolumn{1}{c}{} & \cellcolor{gray!12}\textbf{Avg.}
& \cellcolor{gray!12}0.507
& \cellcolor{gray!12}0.640 $\pm$ 0.026
& \cellcolor{gray!12}0.640 $\pm$ 0.020
& \cellcolor{gray!12}\underline{0.644 $\pm$ 0.023}
& \cellcolor{gray!12}0.599 $\pm$ 0.023
& \cellcolor{gray!12}0.502 $\pm$ 0.008
& \cellcolor{gray!12}0.632 $\pm$ 0.019
& \cellcolor{gray!12}0.583 $\pm$ 0.022
& \cellcolor{gray!12}\textbf{0.691 $\pm$ 0.012} \\

\midrule[0.8pt]

\multirow{4}{*}{\textbf{\makecell{OKVQA\\{[}en{]}}}}
& \textbf{Pixtral} & 0.630 & 0.731 $\pm$ 0.018 & \underline{0.750 $\pm$ 0.013} & 0.739 $\pm$ 0.017 & 0.740 $\pm$ 0.017 & 0.728 $\pm$ 0.022 & 0.742 $\pm$ 0.017 & 0.743 $\pm$ 0.018 & \textbf{0.775 $\pm$ 0.008} \\
& \textbf{LLaVa} & 0.643 & 0.740 $\pm$ 0.012 & 0.733 $\pm$ 0.007 & 0.755 $\pm$ 0.008 & 0.724 $\pm$ 0.013 & 0.738 $\pm$ 0.016 & 0.742 $\pm$ 0.011 & \underline{0.755 $\pm$ 0.013} & \textbf{0.756 $\pm$ 0.022} \\
& \textbf{Qwen} & 0.637 & 0.695 $\pm$ 0.012 & 0.711 $\pm$ 0.010 & 0.705 $\pm$ 0.012 & 0.693 $\pm$ 0.012 & \underline{0.718 $\pm$ 0.010} & 0.711 $\pm$ 0.009 & 0.707 $\pm$ 0.016 & \textbf{0.725 $\pm$ 0.011} \\
\multicolumn{1}{c}{} & \cellcolor{gray!12}\textbf{Avg.}
& \cellcolor{gray!12}0.637
& \cellcolor{gray!12}0.722 $\pm$ 0.014
& \cellcolor{gray!12}0.731 $\pm$ 0.010
& \cellcolor{gray!12}0.733 $\pm$ 0.012
& \cellcolor{gray!12}0.719 $\pm$ 0.014
& \cellcolor{gray!12}0.728 $\pm$ 0.016
& \cellcolor{gray!12}0.732 $\pm$ 0.012
& \cellcolor{gray!12}\underline{0.735 $\pm$ 0.016}
& \cellcolor{gray!12}\textbf{0.752 $\pm$ 0.014} \\

\specialrule{0.55pt}{0.9pt}{0pt}
\specialrule{1.2pt}{0pt}{0pt}

\end{tabular}%
}
\end{table*}
\section{Discussion}
\subsection{\textcolor{black}{Spectral mode analysis}}
\label{sec:spectral_mode_analysis}

\textcolor{black}{
The normalised spectral energy formulation of SURE enables the analysis in terms of singular-value spectrum. If the singular values of $W \in \mathbb{R}^{M \times N}$ are ordered as $\sigma_1 \geq \cdots \geq \sigma_{\min(M,N)}$, we define truncated spectral-energy scores to understand the energy distribution,
}
\[
\textcolor{black}{
\begin{aligned}
E_{\mathrm{Top}\text{-}k}(W)
&= \sum_{r=1}^{k}\sigma_r^2, \\
U_{\mathrm{Top}\text{-}k}(W)
&= 1-\frac{E_{\mathrm{Top}\text{-}k}(W)}{MN}.
\end{aligned}
}
\]
\textcolor{black}{
Here, \textbf{Top-1} uses only the largest singular value, while \textbf{Top-2} uses the two largest singular values. Since our experimental setting uses $M=3$ and $N=10$, the rank of $W$ is at most $3$.
}

\noindent\textcolor{black}{
We find that Top-1 and Top-2 closely match the full SURE score across both SVAMP and Trivia-QA, as detailed in Appendix Table~\ref{tab:strawman_baselines}. This indicates that most of the useful cross-temperature agreement energy is concentrated in the leading singular mode.   When the model is certain, high-temperature probes align with this dominant anchor mode. When the model is uncertain, the high-temperature probes drift away from the anchor semantics, weakening this dominant mode.
This analysis clarifies the role of the spectral formulation.
}
\subsection{\textcolor{black}{Ablations}}
\label{sec:ablations}
\subsubsection{Effect of input augmentations}
\textcolor{black}{
\noindent\textbf{Non-Rephrased} uses only the original input, and the probe set consists only of high-temperature samples from the original query.\\
\noindent\textbf{Same-Input} uses semantically equivalent rephrased inputs, but does not use the mix of rephrased and low-T sampled inputs. Instead, it evaluates each rephrase independently: for each rephrased input, we generate $10$ high-temperature probes, compute the uncertainty score, and average the resulting AUROCs across rephrases.\\
\noindent\textcolor{black}{\textbf{Rephrased-only} computes uncertainty using $10$ rephrased inputs with high temperature responses obtained by sampling each of them once. The rephrase-only setting improves marginally, indicating that input rephrasing is useful, but it adds $2$X rephrasing cost. Same ablation on DSE and SNNE are added in Appendix~\ref{app:input_aug_ablation}.}\\
\noindent\textbf{SURE (Mixed-Rephrased)} uses $5$ equivalent input augmentations and $2$ high-temperature responses from each to form $10$ probe responses.
}\\
\noindent\textcolor{black}{
Figure~\ref{fig:ablation_and_baselines} shows the average AUROC across models, with full per-model results provided in Appendix~\ref{app:input_aug_ablation}. Input augmentation significantly improves performance over the non-rephrased variant. Same-Input also improves over Non-Rephrased, showing that equivalent-input augmentation itself is beneficial. SURE (Mixed-Rephrased) gives the best average performance on both datasets, suggesting that mixing probe responses exposes the most informative set of semantic variations.
}

\subsubsection{Dissecting cross-temperature agreement.}
\textcolor{black}{
We next isolate the role of cross-temperature comparison by comparing SURE with same-temperature controls.\\
\noindent\textbf{Low-T} computes spectral energy using only low-temperature samples.\\
\noindent\textbf{High-T} uses only high-temperature samples. For a matched setup, both controls use $10$ samples.\\
\noindent As shown in Figure~\ref{fig:ablation_and_baselines}, Low-T performs poorly, indicating that low-temperature generations are often too stable to reveal uncertainty. High-T is stronger because it exposes semantic variability, but it still underperforms SURE. This supports the central idea of SURE: low-temperature samples provide stable anchors, while high-temperature probes test whether this stable belief persists under stochastic sampling.
}
\begin{figure}[t!]
    \centering
    \includegraphics[width=0.88\linewidth, height=0.37\textwidth]{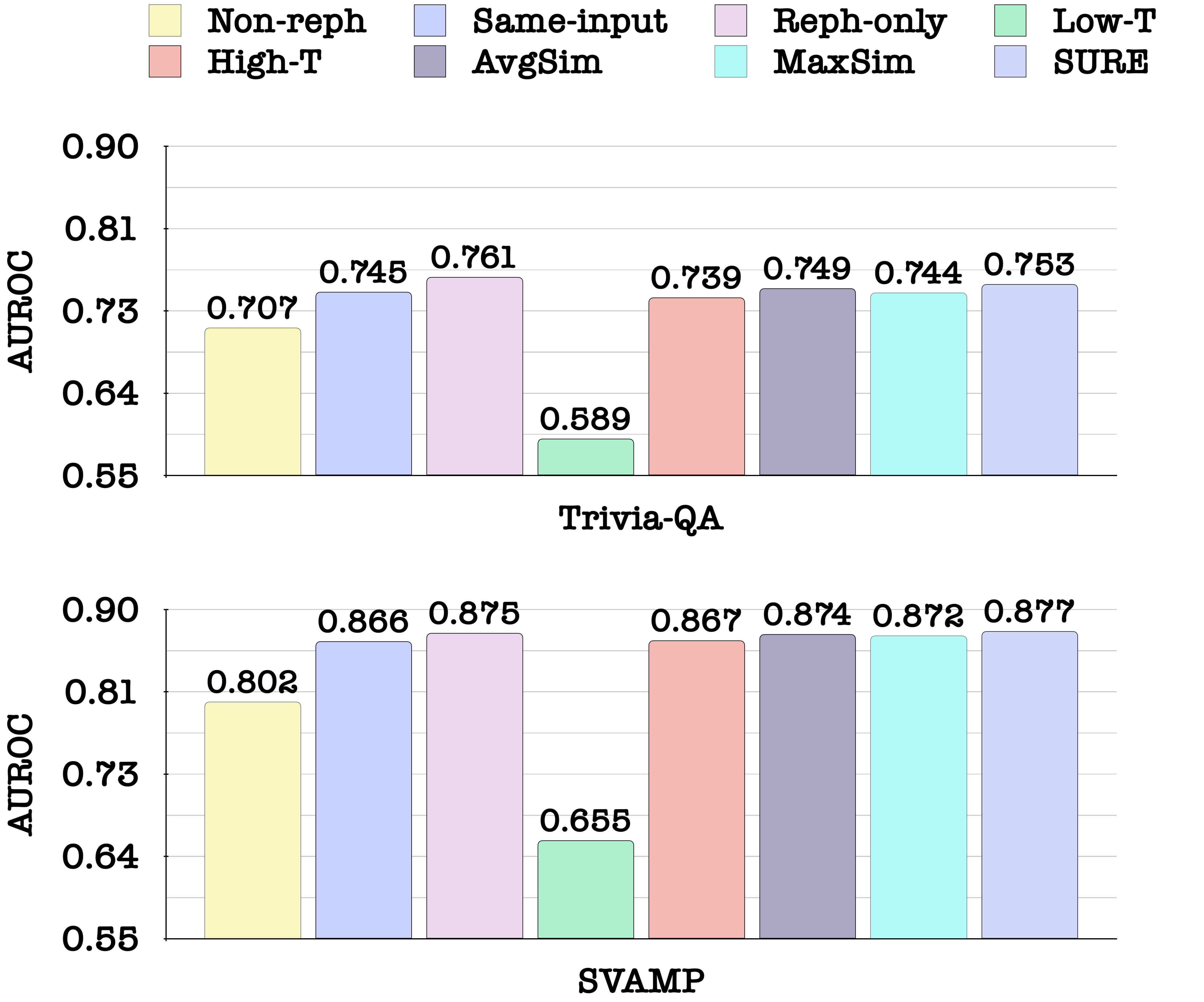}
    \caption{\textcolor{black}{\textbf{Ablations and direct controls for SURE.}
Average AUROC across the six text-QA LLMs on Trivia-QA and SVAMP for input-augmentation variants, same-temperature controls, and direct aggregation baselines. Equivalent-input augmentation and cross-temperature comparison majorly contribute to the final performance.}}
    \label{fig:ablation_and_baselines}
\end{figure}
\subsubsection{Temperature sensitivity}
\label{sec:temperature_sensitivity_main}
\textcolor{black}{
We analyze the sensitivity of SURE performance to the varying anchor temperature ($T_L$) and probe temperature ($T_H$). With $T_H=1.0$, performance remains stable across ($T_L\in\{0.1,0.2,0.3,0.4\}$) as shown in Figure~\ref{fig:temp_ablation}, confirming that low-temperature responses provide stable semantic anchors. With $T_L=0.1$, performance generally improves as $T_H$ increases toward $0.9$-$1.0$, but declines at higher temperatures. This reveals a trade-off that probes that are too stable fail to expose uncertainty, whereas excessively high temperatures produce noisy generations. We use $T_L=0.1$ and $T_H=1.0$ throughout without tuning specific to the dataset and model. Additional gridwise analysis is provided in Appendix~\ref{app:temperature_sensitivity}.
}
\begin{figure}[t!]
    \centering
    \includegraphics[width=1.0\linewidth, height=0.43\textwidth]{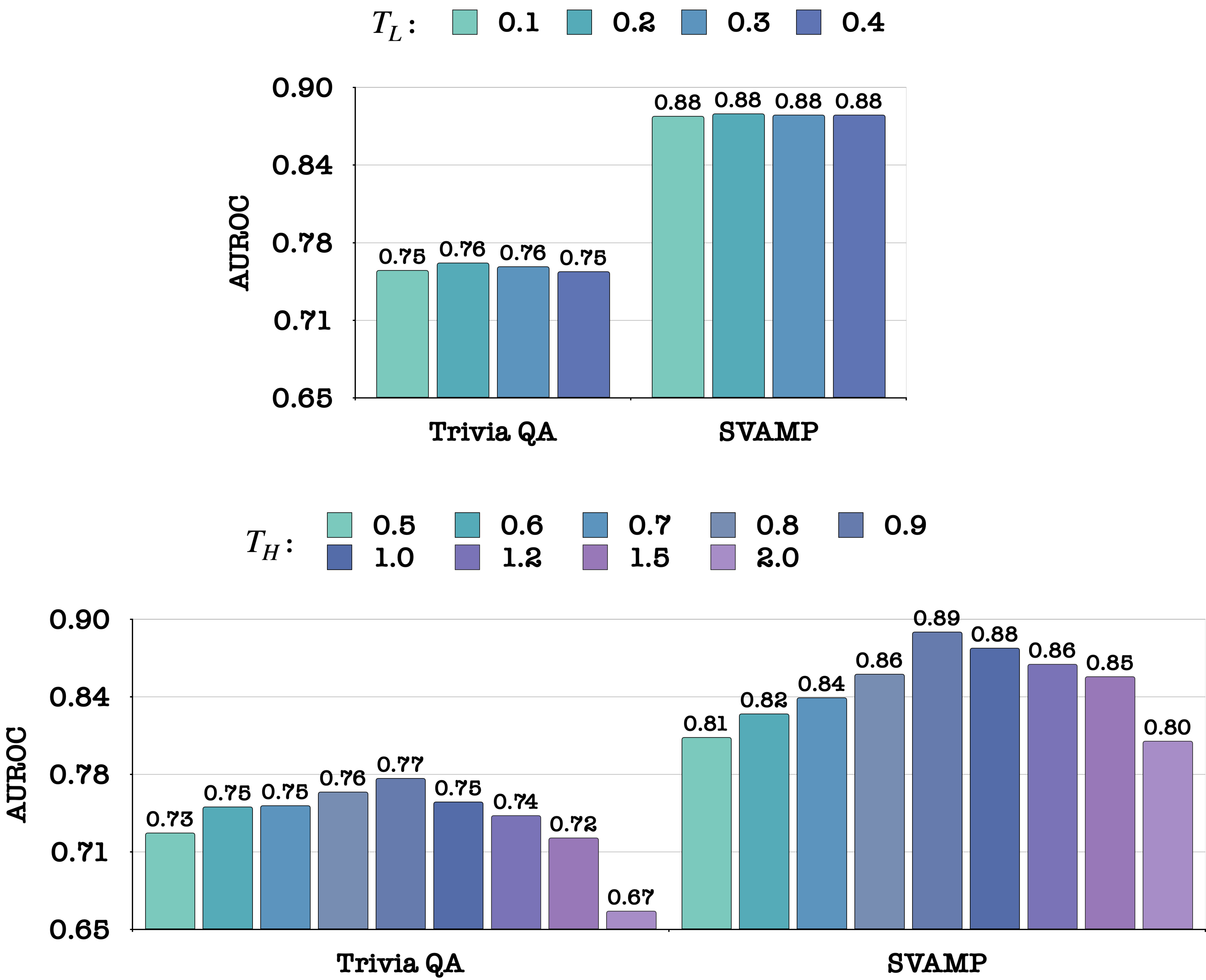}
    \caption{\textcolor{black}{\textbf{Temperature sensitivity of BiG-SURE.} \textit{Top:} Abstention AUROC when varying the anchor temperature ($T_L$), with the probe temperature fixed at ($T_H=1.0$). Performance remains stable across ($T_L\in\{0.1,0.2,0.3,0.4\}$). \textit{Bottom:} Abstention AUROC when varying ($T_H$), with ($T_L=0.1$). Performance generally improves as ($T_H$) approaches $0.9$-$1.0$, while excessively high temperatures degrade performance, revealing a trade-off.
    }}
    \vspace{-0.05in}
    \label{fig:temp_ablation}
\end{figure}
\subsection{\textcolor{black}{Strawman baselines}}
\label{sec:strawman_baselines}
\textcolor{black}{
We compare SURE against direct aggregation controls computed on the same $W$. These include average similarity (\textsc{AvgSim}), maximum anchor--probe similarity (\textsc{MaxSim}), anchor--probe entropy, and average squared similarity (\textsc{AvgSqSim}). Figure~\ref{fig:ablation_and_baselines} compares SURE with \textsc{AvgSim} and \textsc{MaxSim}. Full results are shown in Appendix~\ref{app:strawman_baselines}. \textsc{AvgSqSim} is numerically identical to BiG-SURE (Eqn.~\ref{eqn:sum_singular_values}) and serves as a sanity check.
 \textcolor{black}{This ablation study shows that the specific aggregation method used in SURE carries little value, and it is quite robust to aggregation methods like average similarity, maximum similarity and so on. The squared-similarity formulation remains useful for spectral analysis to inspect singular/spectral modes from a diagnostic perspective (Section~\ref{sec:spectral_mode_analysis}). Anchor-probe entropy performs poorly as complete anchor-probe dissimilarity ($W_{ij}=0$) indicates high uncertainty, while anchor-probe entropy approaches zero in such cases.}
}
\begin{figure}[t!]
    \centering
    \includegraphics[width=1.0\linewidth, height=0.27\textwidth]{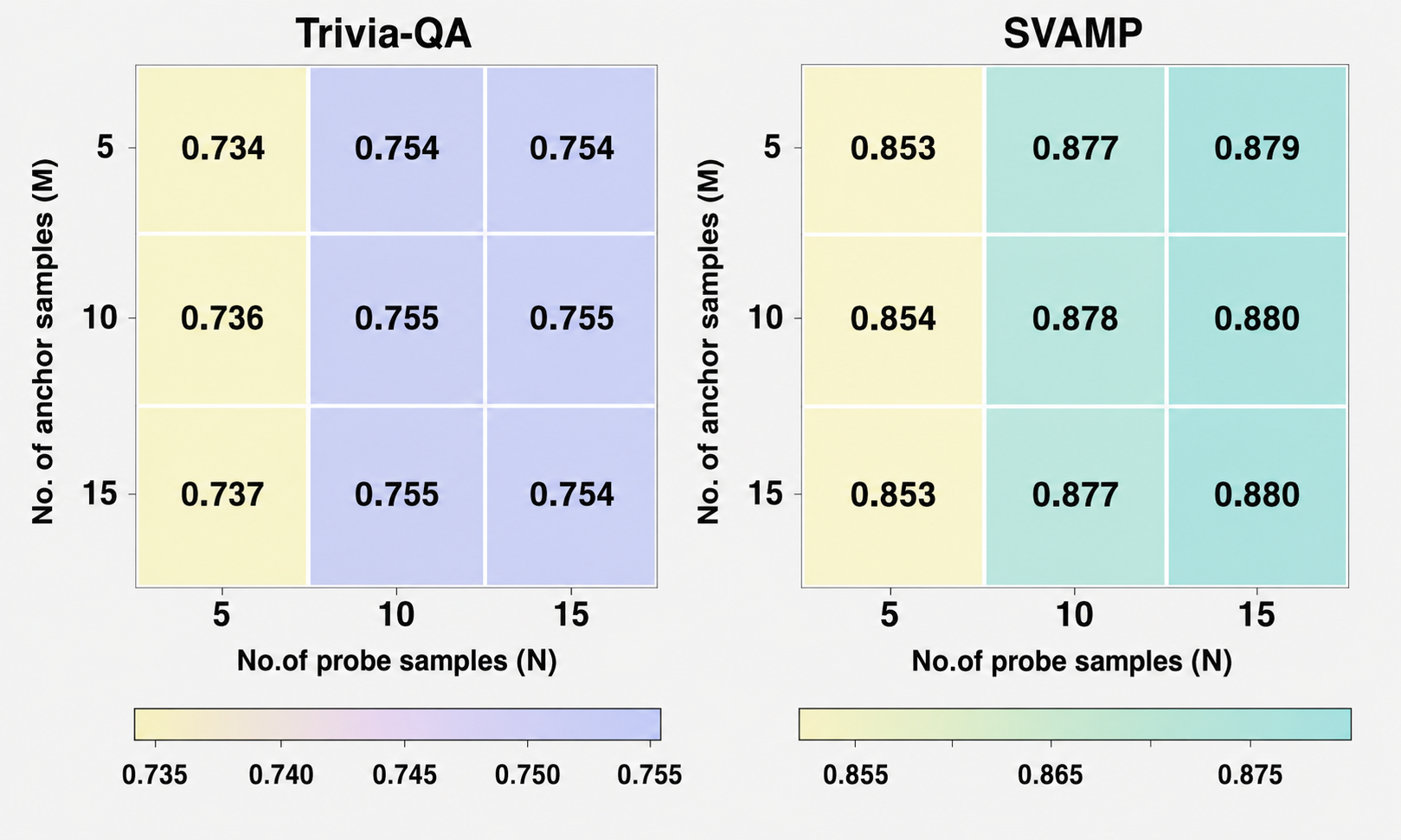}
    \caption{\textcolor{black}{\textbf{Sample sensitivity analysis for SURE.} Abstention AUROCs averaged over LLMs for different numbers of anchor samples ($M$) and probe samples ($N$) for Trivia-QA and SVAMP.
    }}
    \vspace{-0.1in}
    \label{fig:sample_sensitivity}
\end{figure}
\subsection{Sample sensitivity analysis}
\label{sec:sample_sensitivity}
\textcolor{black}{We study the sensitivity of SURE to the number of anchors ($M$) and probes ($N$) in Figure~\ref{fig:sample_sensitivity}. Across both datasets, the performance is more sensitive to the number of probe samples than to the number of anchor samples. Increasing $N$ from $5$ to $10$ gives the largest improvement. Further increasing $N$ yields only marginal gains, indicating saturation. In contrast, increasing $M$ has little effect for fixed $N$, suggesting a small number of stable low-temperature anchors is sufficient.}
\subsection{Human subjective evaluation}
\label{subsec:human-subjective-study}
\noindent \textcolor{black}{\textbf{Multilingual correctness judge:}
For multilingual SciQ, we use Gemini-2.5-Flash to obtain correctness labels by comparing the question, reference answer, and greedy prediction.
To assess the reliability of these labels, we conducted a human evaluation on the French, Japanese, and Chinese subsets of SciQ. For each language, we randomly sampled $20$ responses generated by Aya, balanced between $10$ labelled correct and $10$ labelled incorrect by Gemini-2.5-Flash. $10$ human annotators per language independently judged whether each prediction was correct with respect to the question and reference answer. We then obtain a human-consensus label through majority voting.
Gemini agreed with the human consensus on 90\% of the French examples, 95\% of the Japanese examples, and 90\% of the Chinese examples, with Cohen's $\kappa$ of $0.80$, $0.90$, and $0.80$, respectively, showing strong agreement in this limited sample study.}

\noindent \textcolor{black}{\textbf{Evaluation of rephrasing quality}
SURE assumes that the LLM-generated rephrased questions preserve the meaning of the original input. To evaluate this, we conducted a human study in English and Japanese. For each language, we constructed a controlled set of $20$ original-rephrase pairs. $10$ human annotators per language independently rated the semantic equivalence of each pair on a five-point Likert scale. Ratings from $1$ to $2$ were treated as \emph{non-equivalent}, while ratings from $3$ to $5$ were treated as \emph{equivalent}. For every pair, we obtained a binary human-consensus label through majority voting. The consensus labels agreed with the equivalence labels for all $20$ examples in both English and Japanese, yielding $100\%$ agreement and Cohen's $\kappa=1.00$ within this limited subjective study.}

\subsection{Computational complexity}
A practical consideration for black-box uncertainty estimation is the cost of uncertainty computation. For SURE, the bipartite similarity matrix \(W \in \mathbb{R}^{M \times N}\) formation leads to complexity of $\mathcal{O}(MN \, C_{\text{sem}})$, where \(C_{\text{sem}}\) denotes a single semantic similarity evaluation cost.
The subsequent computation of spectral energy is comparatively lightweight.
 Most existing methods construct a full pairwise similarity matrix over \(N\) sampled responses, leading to
$\mathcal{O}(N^2 \, C_{\text{sem}})$.
 KLE further incurs \(\mathcal{O}(N^3)\) cost. 
However, as mentioned by \cite{nikitin2024kernel},  the uncertainty computation time remains lightweight compared to the LLM inference cost. Hence, we also perform a matched-inference budget comparison to reassess the gains.\\

\subsection{Matched inference-budget comparison}
From the LLM inference cost perspective, SURE uses $M+N=3+10$ responses per input, whereas most prior sampling-based baselines use $N=10$ responses. To ensure that the gains are not simply due to more LLM calls, we re-evaluate the prior works with equalized budget of $N=13$ for text-QA tasks. As shown in Appendix~\ref{appendix:matched-inference-comparison}, SURE outperforms the baselines under this setting as well.
\section{Conclusion}

We introduced BiG-SURE, a black-box uncertainty estimation framework based on cross-temperature bipartite semantic agreement. BiG-SURE uses low-temperature responses as stable anchors and high-temperature responses as probes, constructs an anchor--probe semantic similarity graph, and defines uncertainty as the complement of its normalized squared spectral energy. Empirically, SURE achieves strong abstention AUROC across text-only, multilingual, and multimodal QA tasks, improving over prior black-box uncertainty estimators in most evaluated settings. Our ablations further show that the cross-temperature comparison and the equivalent-input augmentation majorly contribute to the final performance. These results suggest that SURE is a simple, interpretable, and practical method for black-box uncertainty estimation, making BiG-SURE a promising candidate for reliability-aware deployment of LLMs and VLMs in safety-critical applications.
\section{Limitations}

While \textsc{BiG-SURE} shows strong empirical performance across text-only, multilingual, and multimodal QA settings, it has the following limitations.\\

\noindent \textbf{Dependence on the semantic similarity backend.}
SURE relies on an entailment-based semantic similarity function to populate the anchor--probe bipartite graph. If this scorer fails to reliably capture semantic equivalence, the resulting similarity matrix may not accurately reflect the model's uncertainty. This dependence is especially important in multilingual and multimodal settings, where semantic matching across languages or visually grounded responses can itself be challenging.\\

\noindent \textcolor{black}{
\textbf{Consistency does not guarantee correctness.}
SURE can fundamentally underperform if a model mispredicts and produces the same incorrect answer semantically consistently during sampling. In this case, even a perfect semantic-similarity backend causes a consistency-based estimator to be overconfident. This failure mode can be distinguished from errors caused by the NLI backend. Further analysis and examples are discussed in Appendix~\ref{appendix:failure_analysis}.\\}

\noindent \textcolor{black}{
\textbf{Visual grounding failure.}  
For visual QA, the current semantic-similarity backend operates only on the generated textual responses and does not examine the image. Consequently, SURE measures textual generation consistency without considering image-text grounding. It may, therefore, assign high confidence when a VLM consistently produces the same visually overlooked answer. Extending the agreement function to incorporate non-text modalities is an important future work. Further analysis with examples are in Appendix~\ref{appendix:failure_analysis}.\\}

\noindent \textcolor{black}{
\textbf{Limited subjective assessment.} BiG-SURE depends on the quality of meaning-preserving paraphrases, while Gemini-2.5-Flash is used to assign correctness labels for multilingual QA. Our human evaluations provide encouraging evidence for both components, but their limited scale does not constitute comprehensive validation across languages, domains, or question types. These results should therefore be interpreted as sanity checks rather than definitive validation of either the rephrasing model or the correctness judge. A systematic human evaluation of the NLI-based semantic-similarity backend remains for future work.\\
}

\noindent \textbf{Evaluation scope.}
Our experiments focus on question-answering style benchmarks, including factual QA, math word problems, multilingual science QA, and visual QA. Although these tasks cover multiple languages, modalities, and model families, they do not fully represent broader open-ended generation settings such as long-form generation, summarization, dialogue, planning, or code synthesis. Extending cross-temperature bipartite agreement to these broader tasks remains an important direction for future work.

\section*{Acknowledgments}
We thank Google Research for its support through the Gemma Academic Award, which substantially helped meet the computational requirements of this work. We also gratefully acknowledge Google’s support for our conference travel, provided through the Kotak IISc AI–ML Centre.

\bibliography{references}
\appendix
\section{Appendix}
\subsection{Theoretical properties of SURE as an uncertainty estimator}
\label{appn:sure-valid-estimator}
\begin{enumerate}[label={},leftmargin=0pt,itemsep=1em]
    \item
    \begin{theoremstmtbox}
    \textbf{1. Semantic identity:} \textit{If all high temperature sampled prediction sequences are semantically identical and aligned with low-temperature samples, then $W$ converges to ${{\mathbbm{ 1}}}_{M\times N}$ and $\mathcal{U}_{\texttt {norm}}(W;\theta)=0$.}
    \end{theoremstmtbox}
    \noindent\textit{Intuition.}
When all high-temperature generations are semantically identical, they all lie in the same semantic cluster as the low-temperature anchors. Consequently, every anchor--probe pair exhibits maximal entailment, so $W_{ij}\approx 1$ for all $i,j$, and $W$ approaches the matrix of all ones $\mathbbm{1}_{M\times N}$. The spectral energy can be expressed as square of the Frobenius norm $\|W\|_F$ (proof in Appendix~\ref{appendix:confidence_norm}).  With $W$ being approximately equal to $\mathbbm{1}_{M \times N}$, the normalized spectral energy approaches a value of $1$ and the uncertainty, defined as its complement, approaches a value of $0$. In other words, perfect semantic consensus across temperatures leaves no room for ambiguity. The detailed proof is given in Proposition~\ref{appendix:semantic_identity}.

    \item
    \begin{theoremstmtbox}
    \textbf{2. Semantic disjointness:} \textit{If all high temperature responses are mutually semantically disjoint, and mis-aligned with low-temperature samples, then $W$ converges to $\mathbf{0}_{M\times N}$ and $\mathcal{U}_{\texttt{norm}}(W;\theta)=1$.}
    \end{theoremstmtbox}
    \noindent\textit{Intuition.}
If the high-temperature samples are mutually semantically disjoint (i.e., they do not align with the low-temperature meaning and also share no common semantic cluster), then each probe $h_j$ exhibits negligible semantic entailment with every anchor $a_i$. The similarity matrix satisfies $W_{ij}\approx 0$ for all $(i,j)$, and $W$ approaches the all-zero matrix $\mathbf{0}_{M\times N}$. In this case, the spectral energy $\|W\|_F$ goes to $0$.  As a result, $\mathcal{U}_{\texttt{norm}}(W;\theta)=1$. Intuitively, complete semantic mismatch across temperatures corresponds to maximal ambiguity which results in maximal value of SURE metric. A formal proof is shown in Proposition~\ref{appendix:semantic_disjoint}.

    \item
    \begin{theoremstmtbox}
    \textbf{3. Monotonicity:} \textit{For fixed $(M,N)$, if $W'$ corresponds to an input $x'$ where all  elements of  $W'\preceq W$, then,   $\mathcal{U}_\texttt{norm}(W';\theta) \geq  \mathcal{U}_\texttt{norm}(W;\theta)$.}
    \end{theoremstmtbox}
    \noindent\textit{Intuition.}
Holding the number of low and high temperature responses fixed, an increased  semantic diversity in the high-temperature set typically reduces the cross-temperature agreement: each high temperature response $h_j$ has reduced alignment with  low temperature response $a_i$, and the pairwise similarities shrink. The condition $W' \preceq W$ element-wise formalizes this weakening. Since the (normalized) spectral energy is proportional to the Frobenius norm, decreasing the entries decreases $\|W\|_F$, yielding $\mathcal{U}_{\texttt{norm}}(W';\theta) \geq  \mathcal{U}_{\texttt{norm}}(W;\theta)$. Intuitively, when predictions spread across more semantic modes, cross-temperature consistency breaks down and the uncertainty increases. A formal proof is given in Proposition~\ref{prop:monotonicity_unorm_overall}.
\end{enumerate}
\begin{proposition}
\label{appendix:confidence_norm}
Let \(W \in \mathbb{R}^{M\times N}\) with entries satisfying \(0 \le W_{ij} \le 1\) for all \(i,j\). Let
\[
E_{SURE}(W;\theta)=\sum_{i=1}^{r}\sigma_i^2
\]
\[
r=\operatorname{rank}(W)\le \min(M,N),
\]
where \(\sigma_i\) are the non-zero singular values of \(W\). Define
\[
E_{norm}(W;\theta)=\frac{E_{SURE}(W;\theta)}{MN}.
\]
Then
\[
0 \le E_{norm}(W;\theta)\le 1.
\]
Consequently, if
\[
U_{norm}(W;\theta)=1-E_{norm}(W;\theta),
\]
then
\[
0 \le U_{norm}(W;\theta)\le 1.
\]
\end{proposition}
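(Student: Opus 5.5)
The plan is to reduce the spectral quantity to an entrywise sum, where the bounds are immediate. First, I will establish the identity
\[
\sum_{i=1}^{r}\sigma_i^2=\operatorname{tr}(W^\top W)=\sum_{i=1}^{M}\sum_{j=1}^{N}W_{ij}^2=\|W\|_F^2 .
\]
To prove it, take a singular value decomposition $W=U\Sigma V^\top$ with $U,V$ orthogonal. Then $W^\top W=V\Sigma^\top\Sigma V^\top$. By invariance of the trace under similarity, $\operatorname{tr}(W^\top W)=\operatorname{tr}(\Sigma^\top\Sigma)=\sum_i\sigma_i^2$. The zero singular values contribute nothing, so summing only over the $r=\operatorname{rank}(W)$ nonzero ones is harmless. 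Expanding $\operatorname{tr}(W^\top W)=\sum_j (W^\top W)_{jj}=\sum_j\sum_i W_{ij}^2$ gives the entrywise form. This is exactly the equality already asserted in Eqn.~\ref{eqn:sum_singular_values}, so I would only recall it briefly.

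With the identity in hand, the bounds follow termwise. Since $0\le W_{ij}\le 1$, each term satisfies $0\le W_{ij}^2\le 1$. Summing over the $MN$ entries gives $0\le E_{SURE}(W;\theta)\le MN$, and dividing by $MN>0$ yields $0\le E_{norm}(W;\theta)\le 1$. Then $U_{norm}=1-E_{norm}$ lies in $[0,1]$ because the map $t\mapsto 1-t$ sends $[0,1]$ onto itself.

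There is no real obstacle here. The only step needing care is the trace identity relating singular values to entries, which is standard linear algebra. I would also note the equality cases, since they feed into Propositions~\ref{appendix:semantic_identity} and~\ref{appendix:semantic_disjoint}. Equality $E_{norm}=1$ holds iff every $W_{ij}=1$, and $E_{norm}=0$ holds iff $W=\mathbf{0}_{M\times N}$.
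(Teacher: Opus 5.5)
Your proposal is correct and follows essentially the paper's route: use the SVD to identify the spectral energy with $\|W\|_F^2=\sum_{i,j}W_{ij}^2$, then apply the termwise bounds $0\le W_{ij}^2\le 1$ and divide by $MN$. The only cosmetic difference is that you obtain the identity from trace invariance of $W^\top W$ under orthogonal similarity, whereas the paper cites unitary invariance of the Frobenius norm directly; these are the same fact.
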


\begin{proof}
Let the singular value decomposition of \(W\) be
\[
W=U\Sigma V^\top,
\]
where \(U\in\mathbb{R}^{M\times M}\) and \(V\in\mathbb{R}^{N\times N}\) are orthogonal matrices, and
\[
\Sigma=\operatorname{diag}(\sigma_1,\sigma_2,\dots,\sigma_r,0,\dots,0).
\]
Using the unitary invariance of the Frobenius norm,
\[
\|W\|_F^2 = \|U\Sigma V^\top\|_F^2 = \|\Sigma\|_F^2.
\]
Since the Frobenius norm of a diagonal matrix is the sum of squares of its diagonal entries, we obtain
\[
\|W\|_F^2=\sum_{i=1}^{r}\sigma_i^2.
\]
Therefore,
\[
E_{SURE}(W;\theta)=\|W\|_F^2.
\]

Now, by definition of the Frobenius norm,
\begin{equation}
\label{eqn:frobenius_norm}
E_{SURE}(W;\theta)=\|W\|_F^2=\sum_{i=1}^{M}\sum_{j=1}^{N} W_{ij}^2.
\end{equation}
Since \(0 \le W_{ij}\le 1\), it follows that \(0 \le W_{ij}^2\le 1\) for all \(i,j\). Hence,
\[
0 \le \sum_{i=1}^{M}\sum_{j=1}^{N} W_{ij}^2 \le MN.
\]
Thus,
\[
0 \le E_{SURE}(W;\theta)\le MN.
\]
Dividing throughout by \(MN\) yields
\[
0 \le E_{norm}(W;\theta)=\frac{E_{SURE}(W;\theta)}{MN}\le 1.
\]
Finally, since
\[
U_{norm}(W;\theta)=1-E_{norm}(W;\theta),
\]
we immediately obtain
\[
0 \le U_{norm}(W;\theta)\le 1.
\]
\end{proof}
\begin{proposition}
\label{appendix:semantic_identity}
Given $M$ low-temperature and $N$ high-temperature responses, $U_{norm}\rightarrow 0$ iff all sampling responses are semantically similar.
\end{proposition}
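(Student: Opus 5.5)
We read ``semantically similar'' operationally through the similarity backend: the anchors and probes are all semantically similar exactly when every anchor--probe pair attains (or approaches) maximal entailment, i.e.\ $W_{ij}\to 1$ for all $i,j$. The plan is to reduce both directions of the equivalence to a statement about the entries of $W$, using Proposition~\ref{appendix:confidence_norm}. By Eqn.~\ref{eqn:frobenius_norm} we have
\[
U_{norm}(W;\theta)=1-\frac{1}{MN}\sum_{i=1}^{M}\sum_{j=1}^{N}W_{ij}^2=\frac{1}{MN}\sum_{i=1}^{M}\sum_{j=1}^{N}\bigl(1-W_{ij}^2\bigr).
\]
Since $0\le W_{ij}\le 1$, every summand is nonnegative. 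We will therefore work with this representation throughout.

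\emph{($\Leftarrow$)} Suppose all responses are semantically similar, so $W_{ij}\to 1$ for every pair $(i,j)$, i.e.\ $W\to\mathbbm{1}_{M\times N}$. The map $W\mapsto U_{norm}(W;\theta)$ is a polynomial in finitely many entries, hence continuous. We conclude $U_{norm}\to \frac{1}{MN}\sum_{i,j}(1-1)=0$. Equivalently, one can observe that $\mathbbm{1}_{M\times N}$ has rank one with $\sigma_1=\sqrt{MN}$, which gives $E_{norm}=1$ directly.

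\emph{($\Rightarrow$)} Suppose $U_{norm}\to 0$. Since the sum is of nonnegative terms, each term is bounded by the total:
\[
0\le 1-W_{ij}^2\le MN\cdot U_{norm}(W;\theta)\quad\text{for every }(i,j).
\]
Hence $W_{ij}^2\to 1$, and since $W_{ij}\ge 0$ this gives $W_{ij}\to 1$. This also yields a quantitative version: if $U_{norm}\le\varepsilon$, then $W_{ij}\ge\sqrt{1-MN\varepsilon}$ for all $(i,j)$. So every anchor entails every probe with probability tending to one.

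The step I expect to be the main obstacle is translating uniform maximal cross entailment back into ``all sampled responses are semantically similar,'' including probe--probe and anchor--anchor pairs, which $W$ does not observe directly. I would handle this by the convention used for semantic clustering in the DSE literature: two responses are in the same semantic class when they are mutually entailing, and equivalence is transitive. Every probe is equivalent to anchor $a_1$, and every anchor is equivalent to probe $h_1$, so all $M+N$ responses collapse into a single cluster. I will state this transitivity assumption on the backend explicitly, since it is a modeling hypothesis rather than a consequence of the algebra.
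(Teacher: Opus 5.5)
Your proof is correct and is essentially the paper's argument. Both use $E_{SURE}=\|W\|_F^2=\sum_{i,j}W_{ij}^2$ together with $0\le W_{ij}\le 1$, so that $U_{norm}$ is an average of the nonnegative terms $1-W_{ij}^2$. Your entrywise bound $1-W_{ij}^2\le MN\,U_{norm}$ is in fact slightly sharper than the paper, which proves ($\Leftarrow$) only for a constant matrix $W=s\mathbf{1}_{M\times N}$ with $s\to 1$ and ($\Rightarrow$) only under the exact hypothesis $U_{norm}=0$.

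One caution concerns your final transitivity step, which the paper does not attempt; it simply identifies semantic similarity with $W_{ij}\to 1$, as your first paragraph does. The backend score $W_{ij}=p_{\mathrm{entail}}(a_i,h_j)$ is one-directional, so $W=\mathbf{1}_{M\times N}$ does not certify mutual entailment. For example, an anchor ``a red apple'' entails both probes ``an apple'' and ``a red fruit'', yet those probes are not equivalent. Your modeling hypothesis must therefore be that a maximal score places $a_i$ and $h_j$ in the same semantic class, e.g.\ via a symmetric or bidirectional-entailment backend, rather than merely that mutually entailing responses are equivalent.
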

\begin{proof}
To prove this, we show the following:
\begin{itemize}
    \item $U_{norm}\rightarrow 0$ if all sampling responses are semantically similar.
    \item All sampling responses are semantically similar if $U_{norm}\rightarrow 0$.
\end{itemize}
\noindent \textbf{Case I}: As all sampling responses are semantically similar, let $w_{ij}=s, \forall i,j$.
Then the bi-adjacency matrix is constant:
\[
W = s\,\mathbf{1}_{M\times N},
\qquad w_{ij}=s,\ \forall i,j,
\]
where $\mathbf{1}_{M\times N}$ is the all-ones matrix.
The Frobenius norm satisfies
\[
\|W\|_F^2 = \sum_{i=1}^{M}\sum_{j=1}^{N} w_{ij}^2
= \sum_{i=1}^{M}\sum_{j=1}^{N} s^2
= MN\,s^2.
\]
Since $E_{SURE}(W;\theta)=\sum_{i=1}^{\min\{M,N\}}\sigma_i^2=\|W\|_F^2$ (Eqn.~\ref{eqn:frobenius_norm}), we obtain
\[
E_{SURE}(W;\theta)=MN\,s^2.
\]
Therefore the normalized confidence is
\[
E_{\mathrm{norm}}(W;\theta)
=\frac{E_{SURE}(W;\theta)}{MN}
=\frac{MN\,s^2}{MN}
=s^2,
\]
and the normalized uncertainty is
\[
U_{\mathrm{norm}}(W;\theta)=1-E_{\mathrm{norm}}(W;\theta)=1-s^2.
\]
Hence, as $s\to 1$ (perfect semantic similarity), we have $U_{\mathrm{norm}}(W;\theta)\to 0$.\\

\noindent \textbf{Case II}: Suppose $U_{\mathrm{norm}}(W;\theta)=0$. Then $E_{\mathrm{norm}}(W;\theta)=1$, hence
\[
E_{SURE}(W;\theta)=MN.
\]
Therefore
\[
\sum_{i,j} W_{ij}^2 = MN.
\]
Since $0\le W_{ij}\le 1$, we have $W_{ij}^2\le 1$ for each entry, and thus
\[
\sum_{i,j} W_{ij}^2 \le \sum_{i,j} 1 = MN,
\]
with equality if and only if $W_{ij}^2=1$ for every $(i,j)$, i.e., $W_{ij}=1$ for all $(i,j)$. Hence $W=\mathbf{1}_{M\times N}$.\\


\noindent Finally, if $W_{ij}=s$ for all $i,j$, then $W=\mathbf{1}_{M\times N}$ implies $s=1$, and conversely $s=1$ implies $W=\mathbf{1}_{M\times N}$, completing the iff statement.
\end{proof}
\begin{proposition}
\label{appendix:semantic_disjoint}
Given $M$ low-temperature and $N$ high-temperature responses, $U_{\mathrm{norm}}(W;\theta)=1$ iff all cross-temperature response pairs are semantically disjoint, i.e., $\mathrm{sim}(a_i,h_j)=0$ for all $i,j$.
\end{proposition}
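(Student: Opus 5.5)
The plan is to reduce the statement to the Frobenius-norm identity already established in Proposition~\ref{appendix:confidence_norm}. That result gives $E_{SURE}(W;\theta)=\|W\|_F^2=\sum_{i,j}W_{ij}^2$, so
\[
U_{\mathrm{norm}}(W;\theta)=1-\frac{1}{MN}\sum_{i=1}^{M}\sum_{j=1}^{N}W_{ij}^2 .
\]
Throughout, I identify $\mathrm{sim}(a_i,h_j)$ with $W_{ij}$ via Eqn.~\ref{eqn:sem-sim}. As in the previous propositions, I use $0\le W_{ij}\le 1$, since entailment probabilities lie in $[0,1]$. The iff then splits into two directions, mirroring Cases I and II of Proposition~\ref{appendix:semantic_identity}.

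\textbf{Forward direction (disjointness $\Rightarrow U_{\mathrm{norm}}=1$).} If $W_{ij}=0$ for all $i,j$, then $W=\mathbf{0}_{M\times N}$. All of its singular values vanish, so $E_{SURE}=0$, hence $E_{\mathrm{norm}}=0$ and $U_{\mathrm{norm}}=1$. In the limiting form used in the appendix, where $W\to\mathbf{0}$, continuity of $\|\cdot\|_F^2$ gives $U_{\mathrm{norm}}\to 1$ in the same way.

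\textbf{Converse ($U_{\mathrm{norm}}=1\Rightarrow$ disjointness).} From $U_{\mathrm{norm}}=1$ we get $\sum_{i,j}W_{ij}^2=0$. This is a sum of nonnegative terms, so each $W_{ij}^2=0$, and therefore $W_{ij}=0$ for every pair, which is exactly the stated disjointness condition. Note that nonnegativity of each $W_{ij}^2$ holds regardless of sign, so only $W_{ij}\in\mathbb{R}$ is really needed here. The bound $W_{ij}\le 1$ matters only for guaranteeing that $U_{\mathrm{norm}}$ lies in $[0,1]$ at all, which Proposition~\ref{appendix:confidence_norm} already provides.

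\textbf{Main obstacle.} The main difficulty is interpretive rather than technical. The informal axiom speaks of responses being ``semantically different'' and ``mutually disjoint,'' whereas the proposition is stated with the precise condition $\mathrm{sim}(a_i,h_j)=0$. I would state explicitly that semantic disjointness is defined through the similarity operator, so that no claim about mutual disjointness among the probes themselves is needed. Those within-probe relations never enter $W$, and I would remark that they do not affect $U_{\mathrm{norm}}$.
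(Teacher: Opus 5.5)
Your proposal is correct and follows essentially the same route as the paper: both reduce $U_{\mathrm{norm}}$ to $1-\frac{1}{MN}\sum_{i,j}W_{ij}^2$. Both get the converse from the fact that a vanishing sum of nonnegative squares forces every $W_{ij}=0$, and the forward direction from $W=\mathbf{0}_{M\times N}$ giving zero Frobenius energy. Your observations that $W_{ij}\le 1$ is not needed for this equivalence, and that within-probe relations never enter $W$, are accurate.
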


\begin{proof} We show both directions of implications to prove the iff condition.\\
\noindent \textbf{Case I}: If $U_{\mathrm{norm}}(W;\theta)=1$, then $E_{\mathrm{norm}}(W;\theta)=0$, hence $E_{SURE}(W;\theta)=0$ and therefore
\[
\sum_{i,j} W_{ij}^2 = 0.
\]
Since each term $W_{ij}^2\ge 0$, the only way their sum is zero is if $W_{ij}^2=0$ for every $(i,j)$, i.e., $W_{ij}=0$ for all $(i,j)$. Thus $W=\mathbf{0}_{M\times N}$.

\noindent \textbf{Case II}: Conversely, if $W=\mathbf{0}_{M\times N}$, then $\|W\|_F^2=0$, hence $E_{\mathrm{norm}}(W;\theta)=0$ and $U_{\mathrm{norm}}(W;\theta)=1$.
\end{proof}
\begin{theorem}
\label{appendix:cluster_calc}
For $M$ low and $N$ high-temperature responses, we show that,
\begin{enumerate}
    \item The number of feasible clusters $K(W)$ increases with decreasing $W_{ij}$, and vice-versa.
    \item $\mathbb{P}\big(C_N(W)=N\big) = \frac{(K(W))_N}{(K(W))^N}$
    \item $\mathbb{P}\big(C_N(W)<N\big) \leq \binom{N}{2}\frac{1}{K(W)}$
    \item $\mathbb{E}[C_N(W)] = \sum_{i=1}^{K(W)}1-(1-p_i)^{N}\leq N$
    \item Under uniform cluster probability distribution and $0\leq W_{ij}< 1$, $\mathbb{E}[C_N(W)]\rightarrow N$.
    \item Also, $\mathbb{E}[C_N(W)]$ is a monotonic function of $K(W)$.
\end{enumerate}
\end{theorem}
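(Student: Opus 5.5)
The statement leaves $K(W)$ and $C_N(W)$ undefined, so the first step is to fix a model in which all six items become elementary occupancy facts. I will define $K(W)$ as the number of semantic clusters consistent with $W$. Concretely, I threshold similarity, declaring two responses to share a cluster when their entailment score exceeds a fixed level. I then set $K(W)$ to the number of equivalence classes, or alternatively to an effective number $K(W)=\lceil MN/\|W\|_F^2\rceil$, which links it to $E_{\texttt{norm}}$ from Proposition~\ref{appendix:confidence_norm}.

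I model the $N$ probes as i.i.d. draws from a categorical distribution $(p_1,\dots,p_{K(W)})$ over these clusters. I define $C_N(W)$ as the number of distinct clusters hit by the $N$ draws.

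\textbf{Item 1.} With the effective-number definition, this follows from the monotonicity argument already used for Proposition~\ref{prop:monotonicity_unorm_overall}. Lowering the entries of $W$ lowers $\|W\|_F^2$, which raises $MN/\|W\|_F^2$ and hence $K(W)$. Raising the entries does the reverse.

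\textbf{Items 2 and 3.} I specialise to the uniform law $p_k=1/K$.
\begin{itemize}
\item For item 2, I count injective assignments of the $N$ probes to clusters. There are $(K)_N=K(K-1)\cdots(K-N+1)$ of them out of $K^N$ assignments, which gives the stated probability.
\item For item 3, I apply the birthday-problem union bound. The event $C_N<N$ requires some pair of probes to collide, and each of the $\binom{N}{2}$ pairs collides with probability $1/K$.
\end{itemize}

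\textbf{Item 4.} I write $C_N=\sum_k \mathbf 1[\text{cluster }k\text{ hit}]$, so by linearity $\mathbb E[C_N]=\sum_k\bigl(1-(1-p_k)^N\bigr)$. The bound $\le N$ holds because only $N$ draws are made, so $C_N\le N$ surely. Equivalently, $1-(1-p_k)^N\le Np_k$ by Bernoulli's inequality, and $\sum_k p_k=1$.

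\textbf{Item 6.} Under the uniform law, $\mathbb E[C_N]=K\bigl(1-(1-1/K)^N\bigr)$. I will show this is nondecreasing in $K$ by comparing the hitting counts for $K$ and $K+1$ clusters. I check this by treating $K$ as real and differentiating $f(K)=K(1-(1-1/K)^N)$.

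\textbf{Item 5.} This is the step I expect to be the main obstacle, because the hypothesis $0\le W_{ij}<1$ does not by itself force $K\to\infty$. I will interpret the limit as $W\to\mathbf 0$ (or as the entries shrink), so that $K(W)\to\infty$ by item 1. Then either item 3 gives $\mathbb P(C_N=N)\to1$, or expanding $(1-1/K)^N=1-N/K+O(K^{-2})$ gives $\mathbb E[C_N]=N-O(N^2/K)\to N$. I will state explicitly that the convergence is a limit in $K(W)$ and not a fixed-$W$ identity.
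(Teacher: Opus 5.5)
Your items 2, 3 and 5, and the identity in item 4, follow the paper's route: uniform occupancy, the falling-factorial count, the pairwise union bound, the indicator decomposition, and a large-$K$ reading of item 5. The paper also reads item 5 as ``$K(W)$ large, $N^2/K\ll 1$'', so your caveat is the right one. You are also right to state the uniform law explicitly. The paper uses it silently, and item 3 is false without it.

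Two steps genuinely differ.

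\textbf{The bound in item 4.} For $\mathbb{E}[C_N]\le N$ you use $C_N\le N$ surely, or Bernoulli termwise. The paper instead expands $(1-p_i)^N$ binomially and discards the alternating tail without justification. Your argument is the cleaner one.

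\textbf{Item 1.} The paper builds a generative model. A probe in cluster $j$ yields a score $s\sim\mathrm{Beta}(\nu\mu_j,\nu(1-\mu_j))$, and $K$ counts the clusters whose likelihood ratio against the anchor cluster is at least $\alpha$. The log-ratio is $\nu(1-\mu_j)\log\frac{1-s}{s}$ plus a constant, so the feasible set can only grow as $s$ falls. Your choice $K(W)=\lceil MN/\|W\|_F^2\rceil=\lceil 1/E_{\texttt{norm}}\rceil$ is a reasonable effective count. It equals $K$ when a $1/K$ fraction of probes fully agree with the anchors and the rest score $0$, and it uses the whole matrix rather than a single level $s$. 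However, it makes item 1 true by definition.

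The cost shows up downstream. Proposition~\ref{prop:monotonicity_unorm_overall}(ii) cites item 1 as independent evidence that the cluster count rises with $\mathcal{U}_{\texttt{norm}}$. Under your definition, that becomes a restatement of the monotonicity of $\|W\|_F^2$. Three small fixes follow:
\begin{enumerate}
\item Justify item 1 by Proposition~\ref{appendix:monotonicity_unorm} directly, not Proposition~\ref{prop:monotonicity_unorm_overall}, whose part (ii) depends on this very theorem.
\item Fix a convention such as $K(\mathbf{0})=\infty$, since your formula divides by zero there.
\item Drop the threshold-based alternative. $W$ holds only anchor--probe scores, so it does not determine clusters among the probes, and a thresholded similarity need not be transitive.
\end{enumerate}

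\textbf{Item 6.} You name the method but not the inequality that carries it. With $x=1/K$,
\[
f'(K)=1-(1-x)^{N-1}\bigl(1+(N-1)x\bigr),
\]
and its sign is the whole content of the step. The paper proves $(1-x)^m(1+mx)\le 1$ via $h(x)=\ln(1+mx)+m\ln(1-x)$, using $h(0)=0$ and $h'\le 0$. A shorter route is the identity
\[
K\bigl(1-(1-1/K)^N\bigr)=\sum_{n=0}^{N-1}(1-1/K)^n .
\]
Each term is nondecreasing in $K$, at most $1$, and tends to $1$. This gives items 4, 5 and 6 for the uniform law at once.
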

\begin{proof} For simplicity, we assume that the low-temperature responses are semantically similar, and the high-temperature responses show semantic variability.\\
\noindent Let $N$ be the number of high-temperature samples. Also, given the similarity matrix $W$, let the number of possible semantic clusters is $K(W)$ (for example, for $W=\mathbf{1}_{M\times N}$, $K(W)=1$). But $K(W)$ clusters are observed only if an infinite number of samples are drawn. Because of the finite sampling of $N$, we do observe a subset of them. 

    Assume the $M$ low-temperature responses $\{q_i\}_{i=1}^M$ be semantically consistent and belong to a single semantic cluster, denoted by index $\ell$. Let $\{1,\dots,\bar{K}\}$ be a universal set of semantic clusters for a fixed input.
Associate each cluster $j$ with a prototype similarity to the cluster $\ell$,
\[
\mu_j \triangleq \mathrm{sim}(j,\ell)\in[0,1],
\qquad \mu_\ell = 1,
\]
where larger $\mu_j$ means cluster $j$ is semantically closer to the cluster $\ell$.\\
\noindent We assume that, conditional on a high-temperature response belonging to cluster $j$, the observed cross-temperature similarity score $s\in[0,1]$ follows a $\beta$-distribution and has a likelihood of
\[
L_j(s)\ \propto\ s^{\nu \mu_j - 1}(1-s)^{\nu(1-\mu_j)-1},
\qquad \nu>0,
\]
i.e., $s\mid(Z=j)\sim\mathrm{Beta}(\nu\mu_j,\nu(1-\mu_j))$.
Based on a threshold $\alpha\in(0,1]$ and define the set of feasible clusters at similarity level $s$ by a likelihood-ratio rule:
\[
\mathcal{F}_\alpha(s)\ \triangleq\ \left\{j\in\{1,\dots,\bar{K}\}:\ \frac{L_j(s)}{L_\ell(s)}\ \ge\ \alpha\right\}.
\]
Define the number of feasible clusters as $K(W) = K(s)\triangleq|\mathcal{F}_\alpha(s)|$.
Using the Beta likelihood above and $\mu_\ell=1$, the log-likelihood ratio simplifies (up to a constant independent of $j$) to
\[
\log\frac{L_j(s)}{L_\ell(s)}
=
\nu(\mu_j-1)\log s
\]
\[
+ \nu(1-\mu_j)\log(1-s) + \text{const}(j)
\]
For $s$ bounded away from $1$ and in the low-similarity regime (small $s$), the dominant term is
\[
\nu(\mu_j-1)\log s,
\]
since $\log s\to -\infty$ as $s\downarrow 0$ while $\log(1-s)=O(1)$.
Because $\mu_j<1$ for $j\neq \ell$, we have $(\mu_j-1)<0$, hence $(\mu_j-1)\log s$ increases as $s$ decreases (it becomes large positive).
Therefore, for each fixed $j\neq \ell$, the likelihood ratio $L_j(s)/L_\ell(s)$ is nondecreasing as $s$ decreases, implying that once a cluster satisfies the feasibility condition at some $s_2$, it will also satisfy it for any smaller $s_1<s_2$.
Thus $\mathcal{F}_\alpha(s_1)\supseteq \mathcal{F}_\alpha(s_2)$ and $K(s_1)\ge K(s_2)$ ($K(W_1)\ge K(W_2)$).\\

\noindent Let, $C_N(W)$ be the observed number of semantic clusters among $K(W)$ because we use finite $N$ samples instead of infinite, and $C_N(W) \leq K(W)$. Assuming $N$ is not too large and $K(W)\geq N$,
\begin{equation}
\begin{aligned}
\mathbb{P}\big(C_N(W)=N\big)
&= \frac{K(W)}{K(W)} \cdot \frac{K(W)-1}{K(W)}\cdots \\
&\cdots \frac{K(W)-N+1}{K(W)} \\
&= \frac{(K(W))_N}{(K(W))^N}
\end{aligned}
\end{equation}
Also, let the event of responses $i$ and $j$ being assigned to the same cluster be $E_{ij}$. If $Z_{i}$ is the semantic cluster id assigned to response $i$, then
\begin{equation}
    E_{ij} = \{Z_i= Z_j\},\, \mathbb{P}(E_{ij}) = \frac{1}{K(W)}
\end{equation}
Now,
\begin{equation}
\begin{aligned}
    \mathbb{P}(\exists i<j: Z_i=Z_j)&=\mathbb{P}(\cup_{i<j}E_{ij}) \\
    \leq \sum_{i<j}\frac{1}{K(W)} \\
    \implies \mathbb{P}\big(C_N(W)<N\big) &\leq \binom{N}{2}\frac{1}{K(W)}
\end{aligned}
\end{equation}
\\
Let the cluster probabilities among the $K(W)$ feasible set of clusters given $W$ are $\{p_i\}_{i=1}^{K(W)}$. We calculate the expected number of observed clusters among $N$ samples.
\begin{equation}
\begin{aligned}
    C_N(W) &= \sum_{i=1}^{K(W)}I_i\\
    I_i=\{\text{cluster}\, i\,& \text{appears at least once among}\, N\}
\end{aligned}
\end{equation}
\begin{equation}
\begin{aligned}
    \mathbb{E}[C_N(W)] &= \mathbb{E}\bigg[\sum_{i=1}^{K(W)}I_i\bigg]\\
    &= \sum_{i=1}^{K(W)}\mathbb{E}[I_i]\\
    &=\sum_{i=1}^{K(W)}1-(1-p_i)^{N}
\end{aligned}
\end{equation}
\textbf{\underline{Case I}}: For $W_{ij}=1$, implies $K(W)=1$ and hence, $p_i=1$. So,
\begin{equation}
    \mathbb{E}[C_N(W)] = 1-(1-1)^N = 1
\end{equation}
\textbf{\underline{Case II}}: For $W_{ij}=0$, implies $K(W)$ is large. So,
\begin{equation}
\begin{aligned}
    \mathbb{E}[C_N(W)] &= \sum_{i=1}^{K(W)}1-(1-p_i)^N\\
    = &\sum_{i=1}^{K(W)}\bigg[1-\big(1-Np_i+\binom{N}{2}p_i^2-...\big)\bigg]\\
    = &\sum_{i=1}^{K(W)}\bigg[Np_i-\binom{N}{2}p_i^2+...\bigg]\\
    = & \,N\sum_{i=1}^{K(W)}p_i-\binom{N}{2}\sum_{i=1}^{K(W)}p_i^2+...\\
    = & \,N-\binom{N}{2}\sum_{i=1}^{K(W)}p_i^2+...\\
    \leq &\,N
\end{aligned}
\end{equation}
For uniform cluster distribution, $p_i=\frac{1}{K(W)}$, and $\mathbb{E}[C_N(W)]\rightarrow N$ as $K(W)$ is large and hence $\frac{N^2}{K(W)}<<1$.\\

\noindent For uniform distribution with $K(W)=k$ clusters,
\begin{equation}
    p_i = \frac{1}{K(W)} = \frac{1}{k}, \,k>0
\end{equation}
Now, replacing $p_i$ in the cluster equation, we get,
\begin{equation}
    \mathbb{E}[C_N] = k\bigg(1-\bigg(1-\frac{1}{k}\bigg)^N\bigg)
\end{equation}
$C_N$ is shorthand notation of $C_N(W)$. Now, we need to show, $\frac{\partial \mathbb{E}[C_N]}{\partial k} \geq 0$.\\
Denote,
\begin{equation}
    a(k) = 1-\frac{1}{k}, \,a'(k) = \frac{1}{k^2}
\end{equation}
\begin{equation}
    \begin{aligned}
        f(k) &= k - ka(k)^N\\
        \implies f'(k) &= 1 - a(k)^N -kNa(k)^{N-1}a'(k)\\
        &= 1 - a(k)^N -\frac{kN}{k^2}a(k)^{N-1}
    \end{aligned}
\end{equation}
Now, let, $x=\frac{1}{k}\in (0,1]$. So, $a = 1-x$. Now,
\begin{equation}
    \begin{aligned}
        f'(k) &= 1-(1-x)^{N-1}(1-x+Nx)\\
        &= 1-(1-x)^{N-1}(1+(N-1)x)
    \end{aligned}
\end{equation}
Now, we have to show, $(1-x)^{N-1}(1+(N-1)x)\leq 1$ for $x\in [0,1]$.
Let $m=N-1\geq 0$. Define,
\begin{equation}
    h(x) := \ln (1+mx) + m\ln(1-x)
\end{equation}
Notice that,
\begin{equation}
    \begin{aligned}
        h(x) &\leq 0\\
        \implies \exp (h(x)) &\leq 1\\
        \implies (1+mx)(1-x)^m &\leq 1
    \end{aligned}
\end{equation}
Now,
\begin{equation}
    \begin{aligned}
        h'(x) &= \frac{m}{1+mx}+m\frac{-1}{1-x}\\
        &= m\bigg(\frac{1}{1+mx} - \frac{1}{1-x}\bigg)\\
        &= -\frac{m(m+1)x}{(1+mx)(1-x)} \leq 0
    \end{aligned}
\end{equation}
Also, we know $h(0) = 0$. Hence, $h(x)\leq 0$, and hence proved.
\end{proof}
\begin{proposition}
\label{appendix:monotonicity_unorm}
For the normalized uncertainty,
\[
U_{norm}(W;\theta)=1-\frac{E_{SURE}(W;\theta)}{MN}.
\]
Then \(U_{norm}(W;\theta)\) is monotone non-increasing in each entry of \(W\). In particular, if for some fixed \((p,q)\),
\[
\widetilde W = W + \delta\, e_p e_q^\top,
\qquad \delta \ge 0,
\]
then
\[
U_{norm}(\widetilde W;\theta)\le U_{norm}(W;\theta).
\]
\end{proposition}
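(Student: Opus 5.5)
The plan is to reduce everything to the entrywise formula for the spectral energy established in Proposition~\ref{appendix:confidence_norm}, namely $E_{SURE}(W;\theta)=\|W\|_F^2=\sum_{i,j}W_{ij}^2$. This identity rewrites $U_{norm}$ as an explicit function of the entries:
\[
U_{norm}(W;\theta)=1-\frac{1}{MN}\sum_{i=1}^{M}\sum_{j=1}^{N}W_{ij}^2 .
\]
Once this is in place, the singular values play no further role.

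For the rank-one perturbation $\widetilde W = W+\delta e_p e_q^\top$, only the $(p,q)$ entry changes. Hence
\[
\|\widetilde W\|_F^2-\|W\|_F^2=(W_{pq}+\delta)^2-W_{pq}^2=2\delta W_{pq}+\delta^2 .
\]
This difference is nonnegative because $W_{pq}\ge 0$ (edge weights are nonnegative by Eqn.~\ref{eqn:sem-sim}) and $\delta\ge 0$. Dividing by $MN>0$ and taking the complement gives $U_{norm}(\widetilde W;\theta)\le U_{norm}(W;\theta)$.

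Monotonicity "in each entry" follows by the same computation. On the domain $W_{ij}\ge 0$, the map $w\mapsto w^2$ is nondecreasing, so each summand is nondecreasing in its own entry and does not depend on the others. Therefore $U_{norm}$ is non-increasing in each coordinate. Iterating the single-entry step over all $(p,q)$ then gives the elementwise statement $W'\preceq W\Rightarrow U_{norm}(W')\ge U_{norm}(W)$ used in the Monotonicity property.

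The only delicate point is the sign assumption. If negative entries were allowed, $2\delta W_{pq}+\delta^2$ could be negative, so I will state explicitly that the argument uses $W_{pq}\ge 0$. I will also note that if one insists on $\widetilde W$ staying in $[0,1]^{M\times N}$, one should take $\delta\le 1-W_{pq}$. This constraint is not needed for the inequality itself.
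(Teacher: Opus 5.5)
Your proposal is correct and follows essentially the same route as the paper's proof. Both rewrite $U_{norm}$ through the Frobenius identity $E_{SURE}=\sum_{i,j}W_{ij}^2$, compute the change $-(2\delta W_{pq}+\delta^2)/(MN)$ under the single-entry perturbation, and conclude from $W_{pq}\ge 0$ and $\delta\ge 0$.
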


\begin{proof}
We have
\[
U_{norm}(W;\theta)
=
1-\frac{1}{MN}\sum_{i=1}^{M}\sum_{j=1}^{N} W_{ij}^2.
\]
Then
\[
U_{norm}(\widetilde W;\theta)
=
1-\frac{1}{MN}.
\]
\[
\left(
\sum_{(i,j)\neq (p,q)} W_{ij}^2 + (W_{pq}+\delta)^2
\right)
\]
Therefore,
\begin{align}
U_{norm}(\widetilde W;\theta)&-U_{norm}(W;\theta)\\
&=
-\frac{1}{MN}\Big((W_{pq}+\delta)^2-W_{pq}^2\Big) \\
&=
-\frac{1}{MN}\Big(2\delta W_{pq}+\delta^2\Big).
\end{align}
Since \(W_{pq}\ge 0\) and \(\delta\ge 0\), it follows that
\[
2\delta W_{pq}+\delta^2 \ge 0.
\]
Hence,
\[
U_{norm}(\widetilde W;\theta)\le U_{norm}(W;\theta).
\]
Thus, \(U_{norm}(W;\theta)\) is monotone non-increasing in each entry of \(W\).
\end{proof}

\begin{proposition}
\label{prop:monotonicity_unorm_overall}
Assume that the low-temperature responses are semantically similar. Then increasing the number of semantically unique high-temperature answers
weakens cross-temperature agreement in the sense that for two prediction sets
$\mathcal{V}$ and $\mathcal{V}'$ (with the same $M,N$),
the corresponding similarity matrices satisfy
\[
W' \preceq W \quad \text{elementwise, with } W'\neq W.
\tag{A1}
\]
Consequently, \emph{increasing semantic uniqueness of high-temperature answers monotonically increases}
$U_{\mathrm{norm}}(W;\theta)$ and simultaneously increases the expected number of observed semantic clusters.
\end{proposition}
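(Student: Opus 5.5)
The proof has two parts. First, strict monotonicity of $U_{\mathrm{norm}}$ under the ordering (A1). Second, a link from the same ordering to the expected number of observed clusters, using Theorem~\ref{appendix:cluster_calc}.

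\textbf{Part 1 (uncertainty).} Write $\Delta = W - W' \succeq 0$ with $\Delta \neq 0$. We pass from $W'$ to $W$ one entry at a time, adding $\delta_{pq} = W_{pq} - W'_{pq} \ge 0$ at position $(p,q)$. Proposition~\ref{appendix:monotonicity_unorm} says each such step does not increase $U_{\mathrm{norm}}$, so chaining the steps gives $U_{\mathrm{norm}}(W;\theta) \le U_{\mathrm{norm}}(W';\theta)$.

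For strictness we do not need the chain; the closed form gives it directly:
\[
U_{\mathrm{norm}}(W';\theta) - U_{\mathrm{norm}}(W;\theta) = \frac{1}{MN}\sum_{i,j}\bigl(W_{ij}^2 - W_{ij}'^2\bigr) = \frac{1}{MN}\sum_{i,j}\Delta_{ij}\bigl(W_{ij}+W'_{ij}\bigr).
\]
Every term is nonnegative. Since $W' \neq W$, some $\Delta_{pq} > 0$, and then $W_{pq} \ge \Delta_{pq} > 0$, so that term is strictly positive. Hence the increase in $U_{\mathrm{norm}}$ is strict.

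\textbf{Part 2 (clusters).} We use the hypothesis that the anchors lie in a single cluster $\ell$, as in the setup of Theorem~\ref{appendix:cluster_calc}. Item 1 of that theorem says the feasible set $\mathcal{F}_\alpha(s)$ grows as the similarity level $s$ decreases. Elementwise domination $W' \preceq W$ lowers the relevant similarity level (for instance, the mean or typical entry), so $K(W') \ge K(W)$. Item 6 of the theorem, the derivative computation $f'(k) \ge 0$ in the uniform-cluster case, then gives $\mathbb{E}[C_N(W')] \ge \mathbb{E}[C_N(W)]$.

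To argue the direction stated in the proposition, namely that more semantically unique high-temperature answers force (A1), we note the following. A probe that becomes a new unique answer moves from cluster $\ell$ (or a cluster close to $\ell$) to some cluster $j$ with $\mu_j < 1$. Under the Beta model this lowers its expected entailment with every anchor. Holding the other probes fixed, this shrinks one column of $W$ and leaves the rest unchanged, which is exactly (A1).

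\textbf{Main obstacle.} The difficulty is that $K(W)$ was defined via a scalar level $s$, not via a whole matrix. I would make the reduction explicit by choosing the scalar summary $s(W) = \max_{i,j} W_{ij}$ or the mean entry. Either summary is monotone under $\preceq$, so Item 1 applies. The theorem only shows the dominant-term behaviour for small $s$, so I would state the cluster conclusion in the low-similarity regime and otherwise treat it as an assumption inherited from Theorem~\ref{appendix:cluster_calc}. The uncertainty claim in Part 1 needs no such assumption.
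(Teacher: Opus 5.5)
Your proof follows the paper's: Part 1 is Proposition~\ref{appendix:monotonicity_unorm} applied entrywise, and Part 2 chains items 1 and 6 of Theorem~\ref{appendix:cluster_calc} exactly as the paper does. Your identity $\frac{1}{MN}\sum_{i,j}\Delta_{ij}(W_{ij}+W'_{ij})$ makes explicit the strictness the paper only asserts, and your scalar summary $s(W)$ and restriction of item 6 to uniform cluster probabilities are more careful than the paper's wording.

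Drop the paragraph deriving (A1) from ``more unique answers''. The statement, and the paper, take (A1) as the meaning of weakened agreement, so it is not needed. It is also not valid as written: a lower Beta mean does not force the realized entries of $W$ to decrease, and a new unique answer need not lie farther from $\ell$ than the cluster it leaves.

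Your low-similarity caveat is also unnecessary. Since $L_j(s)/L_\ell(s)\propto\bigl((1-s)/s\bigr)^{\nu(1-\mu_j)}$ is decreasing on all of $(0,1)$, item 1 holds globally.
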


\begin{proof}
We prove the two monotone conclusions.

\noindent\textbf{(i) $U_{\mathrm{norm}}$ increases under weakened cross-temperature agreement.}
By Proposition~\ref{appendix:monotonicity_unorm}, the map
$W\mapsto U_{\mathrm{norm}}(W;\theta)$ is monotone non-increasing in each entry of $W$.
Hence, if $W'\preceq W$ elementwise, then
\[
U_{\mathrm{norm}}(W';\theta)\ \ge\ U_{\mathrm{norm}}(W;\theta).
\]
Moreover, since $W'\neq W$, at least one entry decreases strictly, implying the inequality is strict:
\[
U_{\mathrm{norm}}(W';\theta)\ >\ U_{\mathrm{norm}}(W;\theta).
\]

\noindent\textbf{(ii) The expected number of observed semantic clusters increases.}
Since the low-temperature responses are assumed semantically similar, they define a single semantic cluster.
By Theorem~\ref{appendix:cluster_calc}(1), decreasing cross-temperature similarities (i.e., $W'\preceq W$) cannot reduce the
number of feasible semantic clusters, hence
\[
K(W')\ \ge\ K(W).
\]
By Theorem~\ref{appendix:cluster_calc}(6), $\mathbb{E}[C_N(W)]$ is a monotone nondecreasing function of $K(W)$
, and therefore
\[
\mathbb{E}[C_N(W')]\ \ge\ \mathbb{E}[C_N(W)]
\]
In particular, under uniform cluster probabilities (Theorem~\ref{appendix:cluster_calc}(5)), increasing $K(W)$ drives
$\mathbb{E}[C_N(W)]$ upward towards $N$.

\noindent
Combining (i) and (ii), the elementwise weakening of cross-temperature agreement induced by increasing
semantic uniqueness of the high-temperature answers  monotonically increases
$U_{\mathrm{norm}}(W;\theta)$ and simultaneously increases the expected number of observed semantic clusters. This supports the interpretation of $\mathcal{U}_{\texttt{norm}}(W;\theta)$ as a cross-temperature semantic instability score: as high-temperature responses spread across more semantic clusters, the uncertainty score increases- obeying the fundamental notion of uncertainty.
\end{proof}

\subsection{\textcolor{black}{Full strawman-baseline analysis}}
\label{app:strawman_baselines}

\textcolor{black}{
Table~\ref{tab:strawman_baselines} reports the full per-model comparison between BiG-SURE and direct aggregation baselines computed from the same cross-temperature anchor--probe similarity matrix $W$. This analysis is designed to isolate the contribution of the aggregation rule, since all methods use the same generated samples and the same entailment-based similarity scores.
}

\textcolor{black}{
We compare the following variants:
}
\begin{itemize}
    \item \textcolor{black}{\textbf{BiG-SURE:} the proposed normalized squared spectral-energy score, computed using all singular values of $W$.}
    \item \textcolor{black}{\textbf{AvgSim:} the mean anchor--probe similarity, $\frac{1}{MN}\sum_{i,j}W_{ij}$.}
    \item \textcolor{black}{\textbf{AvgSqSim:} the mean squared anchor--probe similarity, $\frac{1}{MN}\sum_{i,j}W_{ij}^{2}$. Since $\sum_r\sigma_r^2=\|W\|_F^2=\sum_{i,j}W_{ij}^2$, this is numerically identical to BiG-SURE.}
    \item \textcolor{black}{\textbf{MaxSim:} the maximum anchor--probe similarity score.}
    \item \textcolor{black}{\textbf{Anchor-probe Ent.:} an entropy-style statistic computed from the anchor--probe similarity distribution.}
\end{itemize}

\textcolor{black}{
The results show that BiG-SURE and AvgSqSim match exactly, as expected from the Frobenius-energy identity.
The direct controls are weaker on average. AvgSim and MaxSim underperform BiG-SURE on both datasets, while anchor--probe entropy performs substantially worse. These results support the use of squared cross-temperature agreement as a compact reliability signal.
}
\begin{table*}[t]
\centering
\caption{Comparison with direct aggregation baselines and truncated spectral variants using entailment-based similarity. Results are reported as mean $\pm$ standard deviation over 5 seeds. AvgSqSim is numerically identical to BiG-SURE because normalized spectral energy equals the normalized squared Frobenius norm of the anchor--probe similarity matrix.}
\label{tab:strawman_baselines}
\setlength{\tabcolsep}{4pt}
\small

\setlength{\aboverulesep}{0pt}
\setlength{\belowrulesep}{0pt}
\setlength{\heavyrulewidth}{1.2pt}
\setlength{\lightrulewidth}{0.55pt}

\resizebox{0.98\linewidth}{!}{%
\renewcommand{\arraystretch}{1.2}
\begin{tabular}{lc:cc:cccc}

\specialrule{1.2pt}{0pt}{1.2pt}
\specialrule{0.55pt}{0pt}{0.9pt}

\textbf{Model} & \textbf{BiG-SURE} & \textbf{Top-1} & \textbf{Top-2} & \textbf{AvgSim} & \textbf{AvgSqSim} & \textbf{MaxSim} & \textbf{Anchor-probe Ent.} \\

\midrule[0.8pt]

\multicolumn{8}{c}{\textbf{SVAMP}} \\
\midrule[0.6pt]

Llama-8B   & \textbf{0.8897 $\pm$ 0.0261} & 0.8812 $\pm$ 0.0256 & 0.8891 $\pm$ 0.0261 & 0.8856 $\pm$ 0.0329 & \textbf{0.8897 $\pm$ 0.0261} & 0.8830 $\pm$ 0.0321 & 0.4455 $\pm$ 0.0111 \\
Qwen-7B    & \textbf{0.8039 $\pm$ 0.0167} & 0.8022 $\pm$ 0.0134 & 0.8031 $\pm$ 0.0161 & 0.8037 $\pm$ 0.0185 & \textbf{0.8039 $\pm$ 0.0167} & 0.7996 $\pm$ 0.0223 & 0.4699 $\pm$ 0.0214 \\
Llama-70B  & 0.9051 $\pm$ 0.0132 & \textbf{0.9053 $\pm$ 0.0145} & 0.9051 $\pm$ 0.0132 & 0.9026 $\pm$ 0.0105 & 0.9051 $\pm$ 0.0132 & 0.9037 $\pm$ 0.0084 & 0.4899 $\pm$ 0.0149 \\
Qwen-72B   & \textbf{0.9018 $\pm$ 0.0392} & 0.9017 $\pm$ 0.0325 & 0.9018 $\pm$ 0.0380 & 0.9011 $\pm$ 0.0356 & \textbf{0.9018 $\pm$ 0.0392} & 0.9002 $\pm$ 0.0342 & 0.3714 $\pm$ 0.0124 \\
Gemma-12B  & 0.8821 $\pm$ 0.0208 & 0.8786 $\pm$ 0.0217 & 0.8820 $\pm$ 0.0208 & \textbf{0.8832 $\pm$ 0.0180} & 0.8821 $\pm$ 0.0208 & 0.8815 $\pm$ 0.0187 & 0.4414 $\pm$ 0.0131 \\
Gemma-27B  & 0.8786 $\pm$ 0.0168 & \textbf{0.8788 $\pm$ 0.0156} & 0.8785 $\pm$ 0.0162 & 0.8661 $\pm$ 0.0237 & 0.8786 $\pm$ 0.0168 & 0.8662 $\pm$ 0.0236 & 0.4766 $\pm$ 0.0151 \\

\midrule[0.4pt]
\textbf{Average} & \textbf{0.8769 $\pm$ 0.0221} & 0.8746 $\pm$ 0.0205 & 0.8766 $\pm$ 0.0217 & 0.8737 $\pm$ 0.0232 & \textbf{0.8769 $\pm$ 0.0221} & 0.8724 $\pm$ 0.0232 & 0.4491 $\pm$ 0.0147 \\

\midrule[0.8pt]

\multicolumn{8}{c}{\textbf{Trivia-QA}} \\
\midrule[0.6pt]

Llama-8B   & \textbf{0.7720 $\pm$ 0.0212} & 0.7716 $\pm$ 0.0185 & 0.7720 $\pm$ 0.0212 & 0.7624 $\pm$ 0.0237 & \textbf{0.7720 $\pm$ 0.0212} & 0.7586 $\pm$ 0.0228 & 0.4085 $\pm$ 0.0116 \\
Qwen-7B    & \textbf{0.8060 $\pm$ 0.0141} & 0.8052 $\pm$ 0.0162 & 0.8059 $\pm$ 0.0138 & 0.8059 $\pm$ 0.0153 & \textbf{0.8060 $\pm$ 0.0141} & 0.8026 $\pm$ 0.0170 & 0.4390 $\pm$ 0.0055 \\
Llama-70B  & 0.7145 $\pm$ 0.0087 & 0.7139 $\pm$ 0.0095 & 0.7144 $\pm$ 0.0060 & \textbf{0.7188 $\pm$ 0.0179} & 0.7145 $\pm$ 0.0087 & 0.7156 $\pm$ 0.0191 & 0.4731 $\pm$ 0.0111 \\
Qwen-72B   & 0.7397 $\pm$ 0.0295 & \textbf{0.7398 $\pm$ 0.0188} & 0.7397 $\pm$ 0.0295 & 0.7320 $\pm$ 0.0344 & 0.7397 $\pm$ 0.0295 & 0.7248 $\pm$ 0.0345 & 0.4124 $\pm$ 0.0183 \\
Gemma-12B  & \textbf{0.7886 $\pm$ 0.0110} & 0.7865 $\pm$ 0.0244 & 0.7883 $\pm$ 0.0162 & 0.7871 $\pm$ 0.0070 & \textbf{0.7886 $\pm$ 0.0110} & 0.7868 $\pm$ 0.0051 & 0.4974 $\pm$ 0.0113 \\
Gemma-27B  & \textbf{0.6991 $\pm$ 0.0165} & 0.6989 $\pm$ 0.0149 & 0.6991 $\pm$ 0.0165 & 0.6872 $\pm$ 0.0208 & \textbf{0.6991 $\pm$ 0.0165} & 0.6773 $\pm$ 0.0188 & 0.5121 $\pm$ 0.0311 \\

\midrule[0.4pt]
\textbf{Average} & \textbf{0.7533 $\pm$ 0.0168} & 0.7527 $\pm$ 0.0170 & 0.7532 $\pm$ 0.0172 & 0.7489 $\pm$ 0.0198 & \textbf{0.7533 $\pm$ 0.0168} & 0.7443 $\pm$ 0.0196 & 0.4571 $\pm$ 0.0148 \\

\specialrule{0.55pt}{0.9pt}{0pt}
\specialrule{1.2pt}{0pt}{0pt}

\end{tabular}%
}
\end{table*}
\subsection{\textcolor{black}{Input-augmentation ablations}}
\label{app:input_aug_ablation}

\textcolor{black}{
We provide the full per-model ablation of input variation strategies in Table~\ref{tab:ablation_rephrased_sameinput_nonreph}. The low-temperature anchors are sampled from the original input in all variants. The variants differ only in how the high-temperature probe set is constructed.
}

\begin{itemize}
    \item \textcolor{black}{\textbf{Non-Rephrased:} probes are generated only from the original input using high-temperature sampling. No equivalent-input augmentation is used.}
    \item \textcolor{black}{\textbf{Rephrased-only:} In this case, $10$ input rephrasings are used instead of $5$, and for each of them high-temperature sampling is used once.}
    \item \textcolor{black}{\textbf{Same-Input:} each equivalent rephrased input is treated independently. For each rephrase, we generate 10 high-temperature probes, compute the SURE uncertainty score, evaluate AUROC, and then average the AUROCs across rephrases.}
    \item \textcolor{black}{\textbf{SURE (Mixed-Rephrased):} this is the default setting. We use 5 equivalent input augmentations and sample 2 high-temperature responses from each, forming one mixed probe pool of 10 responses.}
\end{itemize}

\noindent\textcolor{black}{ The Figure~\ref{fig:ablation-rephrase-only} also shows that using only rephrasings improve the performance across different methods as well. However, it comes with additional rephrasing cost.
The results show that input augmentation is the main contributor: both Same-Input and Mixed-Rephrased substantially outperform Non-Rephrased on average. Mixed-Rephrased further improves over Same-Input on both dataset averages, suggesting that combining semantically equivalent perturbations within the same probe pool provides a modest but consistent additional benefit.
}
\begin{table}[t]
\centering
\caption{Ablation of input variation strategies for SURE. We compare \textit{Non-Rephrased} (no input augmentation), \textit{Same-Input} (multiple generations from the same input), and \textit{Mixed (Rephrased)} (our default setting, using semantically equivalent rephrased inputs). Results are reported as abstention AUROC ($\text{mean} \pm \text{std}$ over 5 seeds). The best result in each row is shown in \textbf{bold}, and the second-best result is \underline{underlined}.}
\label{tab:ablation_rephrased_sameinput_nonreph}
\setlength{\tabcolsep}{5pt}
\small

\setlength{\aboverulesep}{0pt}
\setlength{\belowrulesep}{0pt}
\setlength{\heavyrulewidth}{1.2pt}
\setlength{\lightrulewidth}{0.55pt}

\resizebox{0.9\linewidth}{!}{%
\renewcommand{\arraystretch}{1.22}
\begin{tabular}{l c c c}

\specialrule{1.2pt}{0pt}{1.2pt}
\specialrule{0.55pt}{0pt}{0.9pt}

\multirow{2}{*}{\textbf{Model}} &
\multicolumn{3}{c}{\textbf{AUROC ($\uparrow$)}} \\
\cmidrule(lr){2-4}
& \textbf{Non-Rephrased} & \textbf{Same-Input} & \textbf{SURE (Mixed-Rephrased)} \\

\midrule[0.8pt]

\multicolumn{4}{c}{\textbf{SVAMP}} \\
\midrule[0.6pt]

Gemma-12B & 0.7771 $\pm$ 0.0092 & \underline{0.8536 $\pm$ 0.0229} & \textbf{0.8821 $\pm$ 0.0208} \\
Gemma-27B & 0.7761 $\pm$ 0.0087 & \underline{0.8470 $\pm$ 0.0242} & \textbf{0.8786 $\pm$ 0.0168} \\
Llama-8B  & 0.7050 $\pm$ 0.0394 & \textbf{0.8989 $\pm$ 0.0239} & \underline{0.8897 $\pm$ 0.0261} \\
Llama-70B & 0.8893 $\pm$ 0.0074 & \underline{0.9031 $\pm$ 0.0118} & \textbf{0.9051 $\pm$ 0.0132} \\
Qwen-7B   & 0.7619 $\pm$ 0.0200 & \underline{0.7858 $\pm$ 0.0100} & \textbf{0.8039 $\pm$ 0.0167} \\
Qwen-72B  & 0.9011 $\pm$ 0.0110 & \textbf{0.9102 $\pm$ 0.0211} & \underline{0.9018 $\pm$ 0.0392} \\

\midrule
\textbf{Avg.} & 0.8018 & \underline{0.8664} & \textbf{0.8769} \\

\midrule[0.8pt]

\multicolumn{4}{c}{\textbf{Trivia-QA}} \\
\midrule[0.6pt]

Gemma-12B & 0.6978 $\pm$ 0.0117 & \underline{0.7606 $\pm$ 0.0113} & \textbf{0.7886 $\pm$ 0.0110} \\
Gemma-27B & 0.6051 $\pm$ 0.0086 & \underline{0.6896 $\pm$ 0.0166} & \textbf{0.6991 $\pm$ 0.0165} \\
Llama-8B  & 0.7435 $\pm$ 0.0068 & \textbf{0.7804 $\pm$ 0.0130} & \underline{0.7720 $\pm$ 0.0212} \\
Llama-70B & 0.7065 $\pm$ 0.0081 & \textbf{0.7244 $\pm$ 0.0142} & \underline{0.7145 $\pm$ 0.0087} \\
Qwen-7B   & 0.7807 $\pm$ 0.0054 & \underline{0.7885 $\pm$ 0.0087} & \textbf{0.8060 $\pm$ 0.0141} \\
Qwen-72B  & 0.7096 $\pm$ 0.0104 & \underline{0.7284 $\pm$ 0.0174} & \textbf{0.7397 $\pm$ 0.0295} \\

\midrule
\textbf{Avg.} & 0.7072 & \underline{0.7453} & \textbf{0.7533} \\

\specialrule{0.55pt}{0.9pt}{0pt}
\specialrule{1.2pt}{0pt}{0pt}

\end{tabular}%
}
\end{table}
\begin{figure}[t!]
    \centering
    \includegraphics[width=0.85\linewidth, height=0.4\textwidth]{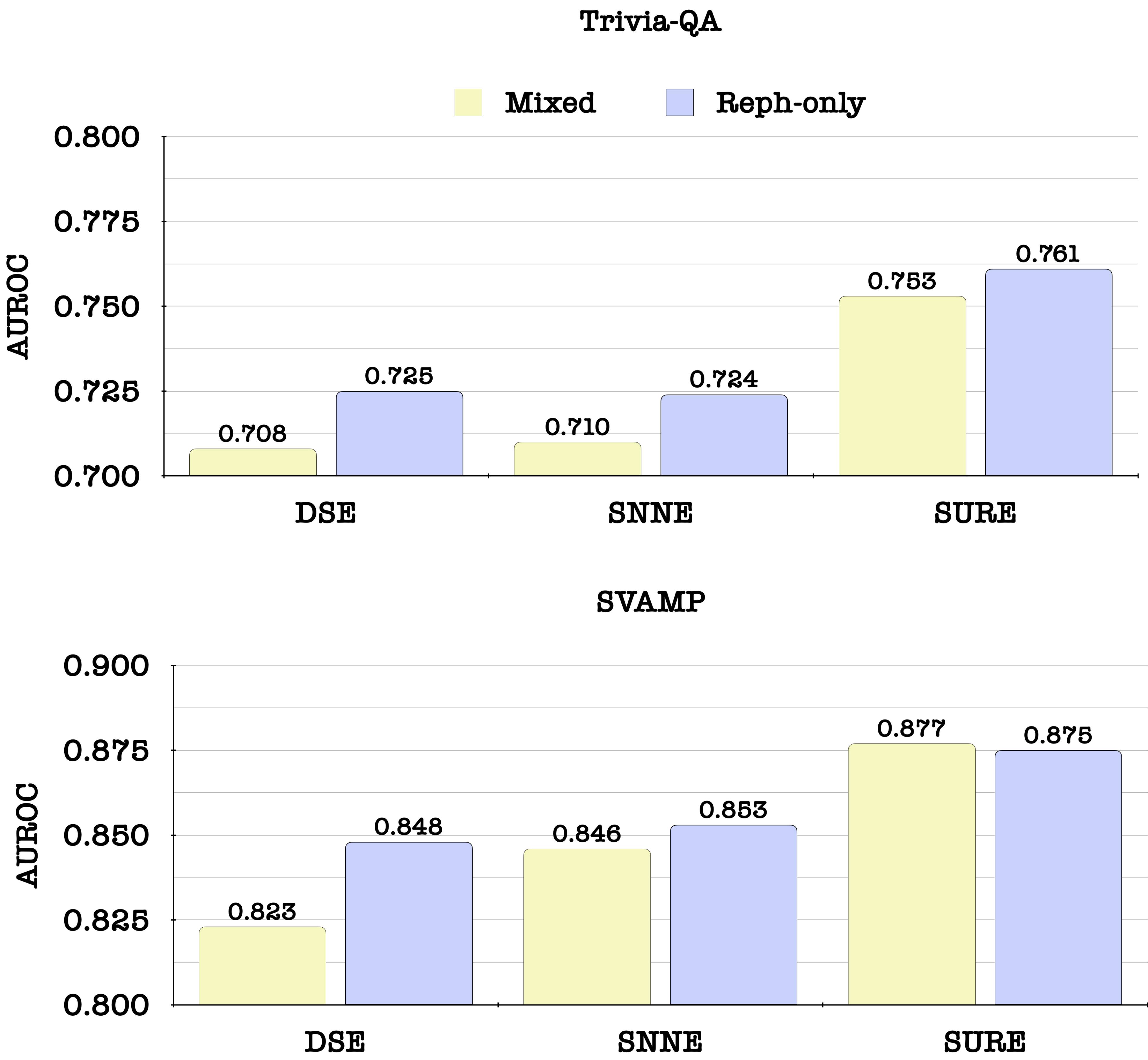}
    \caption{\textcolor{black}{\textbf{Effect of Rephrase-only ablation on all methods.}
Using Rephrase-only inputs for different methods improves them most of the time, however it comes with $2X$ rephrase cost.}}
    \vspace{-0.05in}
    \label{fig:ablation-rephrase-only}
\end{figure}
\begin{table*}[t]
\centering
\caption{Comparison of baseline methods using 13 samples. Results are reported as abstention AUROC ($\text{mean} \pm \text{std}$ over 5 seeds). The second column reports the base task accuracy. The best result in each row is highlighted in \textbf{bold}, and the second-best result is \underline{underlined}.}
\label{tab:baselines_13samples}
\setlength{\tabcolsep}{4pt}
\small

\setlength{\aboverulesep}{0pt}
\setlength{\belowrulesep}{0pt}
\setlength{\heavyrulewidth}{1.2pt}
\setlength{\lightrulewidth}{0.55pt}

\resizebox{0.98\linewidth}{!}{%
\renewcommand{\arraystretch}{1.2}
\begin{tabular}{l c c c c c c c c c}

\specialrule{1.2pt}{0pt}{1.2pt}
\specialrule{0.55pt}{0pt}{0.9pt}

\textbf{Model} & \textbf{Acc.} & \textbf{DSE} & \textbf{SumEigv} & \textbf{Deg} & \textbf{NumSet} & \textbf{LexSim} & \textbf{KLE} & \textbf{SNNE} & \textbf{SURE} \\

\midrule[0.8pt]

\multicolumn{10}{c}{\textbf{Trivia-QA}} \\
\midrule[0.6pt]

Qwen-7B   & 0.495 & 0.7897 $\pm$ 0.0034 & \underline{0.7984 $\pm$ 0.0052} & 0.7979 $\pm$ 0.0048 & 0.7884 $\pm$ 0.0045 & 0.7870 $\pm$ 0.0081 & 0.7733 $\pm$ 0.0076 & 0.7884 $\pm$ 0.0033 & \textbf{0.8060 $\pm$ 0.0141} \\
Llama-8B  & 0.661 & 0.7221 $\pm$ 0.0052 & 0.7203 $\pm$ 0.0057 & 0.7320 $\pm$ 0.0058 & 0.7123 $\pm$ 0.0072 & \underline{0.7367 $\pm$ 0.0069} & 0.7185 $\pm$ 0.0032 & 0.7325 $\pm$ 0.0068 & \textbf{0.7720 $\pm$ 0.0212} \\
Gemma-12B & 0.643 & 0.7084 $\pm$ 0.0102 & 0.7078 $\pm$ 0.0107 & 0.7045 $\pm$ 0.0094 & 0.7094 $\pm$ 0.0100 & \underline{0.7326 $\pm$ 0.0144} & 0.6974 $\pm$ 0.0122 & 0.7182 $\pm$ 0.0142 & \textbf{0.7886 $\pm$ 0.0110} \\
Gemma-27B & 0.755 & 0.6317 $\pm$ 0.0055 & 0.6364 $\pm$ 0.0033 & 0.6353 $\pm$ 0.0030 & 0.6305 $\pm$ 0.0054 & \underline{0.6584 $\pm$ 0.0057} & 0.6290 $\pm$ 0.0064 & 0.6189 $\pm$ 0.0102 & \textbf{0.6991 $\pm$ 0.0165} \\
Llama-70B & 0.767 & \underline{0.7285 $\pm$ 0.0086} & 0.7582 $\pm$ 0.0097 & \textbf{0.7617 $\pm$ 0.0095} & 0.7157 $\pm$ 0.0117 & 0.7419 $\pm$ 0.0071 & 0.7064 $\pm$ 0.0109 & 0.7187 $\pm$ 0.0063 & 0.7145 $\pm$ 0.0087 \\
Qwen-72B  & 0.690 & 0.6988 $\pm$ 0.0125 & \underline{0.7232 $\pm$ 0.0051} & 0.7167 $\pm$ 0.0054 & 0.6779 $\pm$ 0.0087 & 0.6326 $\pm$ 0.0031 & 0.6999 $\pm$ 0.0042 & 0.7006 $\pm$ 0.0087 & \textbf{0.7397 $\pm$ 0.0295} \\

\midrule
\textbf{Avg.} & 0.669 & 0.7132 & \underline{0.7240} & 0.7247 & 0.7057 & 0.7149 & 0.7041 & 0.7129 & \textbf{0.7533} \\

\midrule[0.8pt]

\multicolumn{10}{c}{\textbf{SVAMP}} \\
\midrule[0.6pt]

Qwen-7B   & 0.688 & 0.7152 $\pm$ 0.0271 & 0.7839 $\pm$ 0.0199 & 0.7779 $\pm$ 0.0186 & 0.7838 $\pm$ 0.0279 & 0.7572 $\pm$ 0.0202 & \underline{0.7871 $\pm$ 0.0185} & 0.7745 $\pm$ 0.0297 & \textbf{0.8039 $\pm$ 0.0167} \\
Llama-8B  & 0.672 & 0.6203 $\pm$ 0.0224 & 0.8598 $\pm$ 0.0257 & 0.8617 $\pm$ 0.0267 & 0.8747 $\pm$ 0.0214 & 0.8595 $\pm$ 0.0169 & 0.8626 $\pm$ 0.0191 & \underline{0.8825 $\pm$ 0.0205} & \textbf{0.8897 $\pm$ 0.0261} \\
Gemma-12B & 0.730 & 0.7731 $\pm$ 0.0162 & \underline{0.7979 $\pm$ 0.0157} & 0.7998 $\pm$ 0.0162 & 0.7902 $\pm$ 0.0179 & 0.7940 $\pm$ 0.0166 & 0.7940 $\pm$ 0.0149 & 0.8011 $\pm$ 0.0156 & \textbf{0.8821 $\pm$ 0.0208} \\
Gemma-27B & 0.794 & 0.7482 $\pm$ 0.0148 & 0.7564 $\pm$ 0.0113 & 0.7573 $\pm$ 0.0111 & 0.7404 $\pm$ 0.0140 & 0.7426 $\pm$ 0.0146 & 0.7422 $\pm$ 0.0150 & \underline{0.8039 $\pm$ 0.0123} & \textbf{0.8786 $\pm$ 0.0168} \\
Llama-70B & 0.797 & 0.8788 $\pm$ 0.0112 & 0.8844 $\pm$ 0.0134 & 0.8872 $\pm$ 0.0130 & 0.8885 $\pm$ 0.0079 & 0.8759 $\pm$ 0.0073 & 0.8896 $\pm$ 0.0116 & \underline{0.8982 $\pm$ 0.0115} & \textbf{0.9051 $\pm$ 0.0132} \\
Qwen-72B  & 0.849 & 0.9020 $\pm$ 0.0225 & \textbf{0.9235 $\pm$ 0.0088} & \underline{0.9234 $\pm$ 0.0094} & 0.8837 $\pm$ 0.0221 & 0.8074 $\pm$ 0.0138 & 0.8421 $\pm$ 0.0225 & 0.9065 $\pm$ 0.0098 & 0.9018 $\pm$ 0.0392 \\

\midrule
\textbf{Avg.} & 0.755 & 0.7729 & 0.8343 & 0.8346 & 0.8269 & 0.8061 & 0.8196 & \underline{0.8445} & \textbf{0.8769} \\

\specialrule{0.55pt}{0.9pt}{0pt}
\specialrule{1.2pt}{0pt}{0pt}

\end{tabular}%
}
\end{table*}
\subsection{Matched inference-budget analysis}
\label{appendix:matched-inference-comparison}
\textcolor{black}{Our default SURE configuration generates $M=3$ low-temperature anchors and $N=10$ high-temperature probes, resulting in $13$ total LLM generations per input. In contrast, several sampling-based baselines are commonly evaluated with $10$ generations. To verify that the observed improvements are not simply due to this larger generation budget, we rerun the text-QA baselines using $N=13$ samples. As reported in Table~\ref{tab:baselines_13samples}, SURE retains its performance advantage even when the baselines are given the same number of LLM calls, suggesting that the improvement comes from the cross-temperature anchor--probe construction rather than from additional sampling alone.}
\subsection{\textcolor{black}{Temperature sensitivity analysis}}
\label{app:temperature_sensitivity}

\textcolor{black}{
We evaluate the sensitivity of SURE to the choice of low-temperature anchor sampling temperature $T_L$ and high-temperature probe sampling temperature $T_H$. Figure~\ref{fig:temperature_sensitivity} reports abstention AUROC on SVAMP and Trivia-QA for $T_L \in \{0.1,0.2,0.3,0.4\}$ and $T_H \in \{1.0,1.2,1.5,2.0\}$.
SURE is relatively insensitive to the exact choice of $T_L$ within the low-temperature range: for a fixed $T_H$, changing $T_L$ produces only small variations in AUROC. This supports the role of low-temperature samples as stable semantic anchors. Second, performance is more sensitive to the probe temperature $T_H$. Moderate probe temperatures, especially $T_H=1.0$ and $T_H=1.2$, perform best, while very high temperatures such as $T_H=2.0$ degrade performance substantially. This suggests that the probes should introduce semantic diversity without becoming excessively noisy. We use the $T_L=0.1$ and $T_H=1.0$ in all results in the paper, without tuning.
}
\begin{figure}[t!]
    \centering
    \includegraphics[width=1.0\linewidth, height=0.23\textwidth]{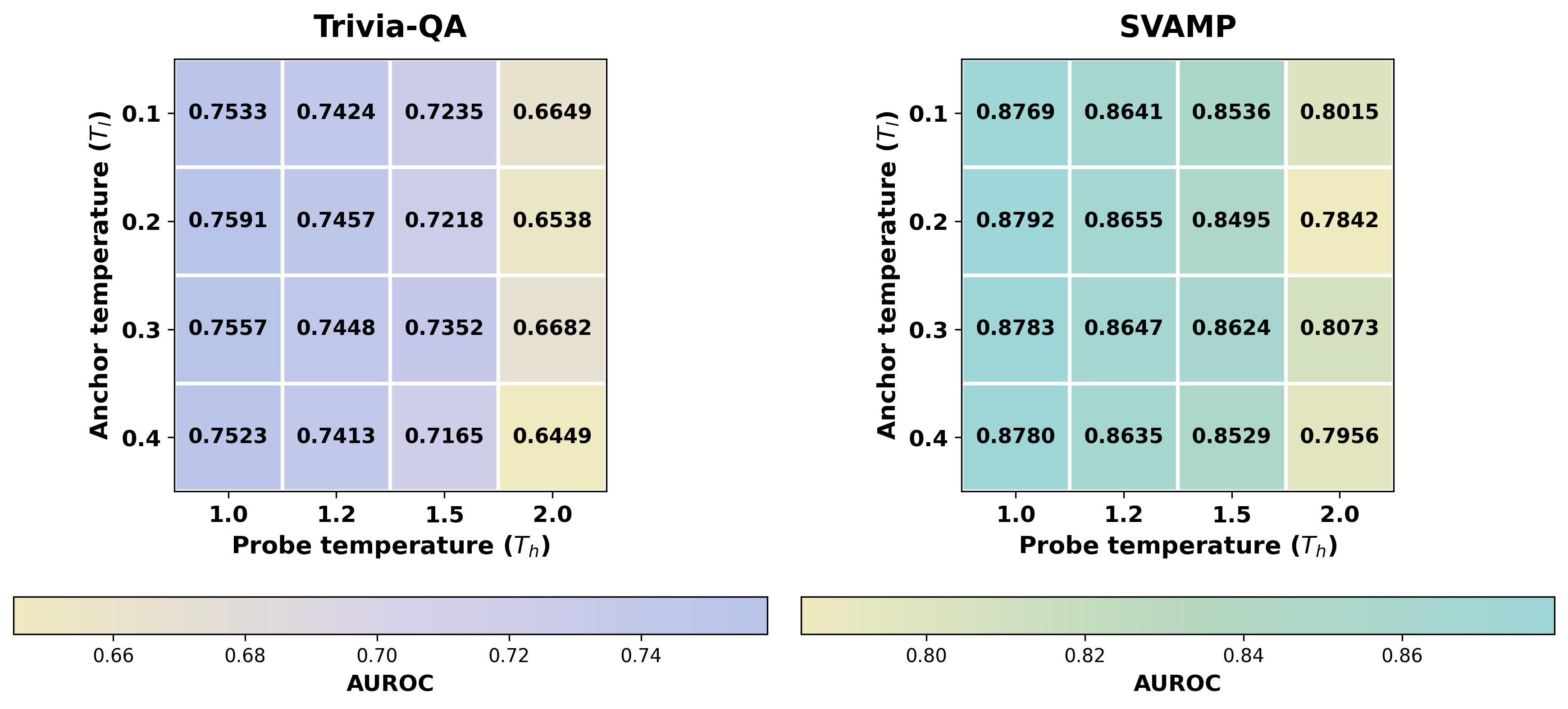}
    \caption{\textbf{Temperature sensitivity analysis for SURE.} Abstention AUROC for different anchor temperatures ($T_l$) and probe temperatures ($T_h$) on Trivia-QA and SVAMP. Performance is relatively stable across low-temperature anchor choices, while very high probe temperatures, especially $T_h=2.0$, reduce AUROC.}
\label{fig:temperature_sensitivity}
    \vspace{-0.1in}
\end{figure}
\subsection{\textcolor{black}{Input augmentation prompts and implementation details}}
\label{app:input_aug_prompts}

\textcolor{black}{
This section provides the implementation details for the equivalent-input augmentations used in SURE. For textual inputs, we use LLM-based rephrasing to create semantically equivalent versions of the original question. The rephrased questions are used only to generate high-temperature probe responses; the low-temperature anchors are generated from the original input. This preserves the interpretation of anchors as stable responses to the original question, while allowing the probes to test semantic stability under both stochastic decoding and meaning-preserving input variation. The prompts used to create semantics-preserving textual augmentations using Llama-3.1-8B-Instruct (for text-QA) and Gemini-2.5-Flash (for multilingual QA) are shown below. For multimodal QA, we apply both textual and visual input augmentations. The textual component rephrases the question while keeping the associated image fixed. The visual component perturbs the image while preserving its semantic content. We use lightweight image transformations including mean blur with kernel size $5$, affine rotation of $\pm 10^\circ$ around the image center, and Gaussian pixel noise sampled from $\mathcal{N}(0,10)$ followed by clipping to the valid pixel range $[0,255]$. These transformations are intentionally mild: they are designed to probe robustness to small input changes without changing the answer semantics. Together, these augmentations allow SURE to test whether high-temperature probe responses remain aligned with stable low-temperature anchors under equivalent input variations.
}
\begin{tcolorbox}[
    colback=gray!8,
    colframe=gray!55,
    title=\textbf{Prompt for Gemini: Multilingual Rephrasing},
    fonttitle=\bfseries,
    boxrule=0.6pt,
    arc=2pt,
    left=6pt,right=6pt,top=6pt,bottom=6pt
]
\small
\textbf{System instruction:}\\
You are a multilingual question rephrasing assistant. Your task is to rephrase questions in multiple languages using \textbf{only} the information present in the original questions. You must \textbf{never} add external knowledge.

\vspace{4pt}
\textbf{Strict rules:}
\begin{enumerate}[leftmargin=*,nosep]
    \item Use only words and concepts from the original question; do not add any external knowledge.
    \item Do not add descriptors that hint at the answer.
    \item Keep all specific names, titles, dates, and key terms from the original exactly as written.
    \item Each rephrase should ask for the exact same information---no more, no less.
    \item Maintain the same language style and formality level for each language.
    \item Generate rephrasings for each language.
\end{enumerate}

\vspace{4pt}
\textbf{Input template:}
\begin{quote}
\ttfamily
Original questions: \{questions\_text\}
\end{quote}

\textbf{Output format:}
\begin{quote}
\ttfamily
[english] \\
rephrase here \\[2pt]
[chinese] \\
rephrase here \\[2pt]
[japanese] \\
rephrase here \\[2pt]
[french] \\
rephrase here
\end{quote}

\textbf{Instruction:} Provide the output in exactly the format above.
\end{tcolorbox}
\begin{tcolorbox}[
    colback=gray!8,
    colframe=gray!55,
    title=\textbf{Prompt for Llama: English Rephrasing},
    fonttitle=\bfseries,
    boxrule=0.6pt,
    arc=2pt,
    left=6pt,right=6pt,top=6pt,bottom=6pt
]
\small
\textbf{System instruction:} \\
You are a question rephrasing assistant. You rephrase questions using \textbf{only} the information present in the original question. You must \textbf{never} add external knowledge.

\vspace{4pt}
\textbf{User prompt:}
\begin{quote}
\ttfamily
Rephrase the following question.
\end{quote}

\textbf{Strict rules:}
\begin{enumerate}[leftmargin=*,nosep]
    \item Use only words and concepts from the original question; do not add any external knowledge.
    \item Do not add descriptors that hint at the answer.
    \item Keep all specific names, titles, and key terms from the original exactly as written.
    \item Each rephrase should ask for the exact same information---no more, no less.
\end{enumerate}

\vspace{4pt}
\textbf{Input template:}
\begin{quote}
\ttfamily
Original question: \{question\}
\end{quote}

\textbf{Output instruction:}
\begin{quote}
\ttfamily
Output exactly the rephrased question.
\end{quote}
\end{tcolorbox}
\subsection{Recruiting human subjects}
\label{appendix:human-study}
Participants for the human-evaluation studies were recruited through Prolific, an online participant-recruitment platform. For the multilingual evaluations, we used Prolific's language-based prescreening filters to recruit participants. Participants were compensated through Prolific according to the platform's standard compensation rates. Recruitment, communication, and payment were handled through the platform. Prolific-assigned participant identifiers were used throughout the studies, and no directly identifying information, such as participants' names or contact details, was collected by the authors. Participant identities therefore remained anonymous to the authors.
\subsection{\textcolor{black}{Evaluation of greedy responses}}
\label{appendix:evaluation}
\textcolor{black}{Evaluation of greedy responses for the $3$ tasks of textual QA, multilingual QA and multimodal QA is performed as below:\\
\textbf{Textual QA:} The evaluation of greedy responses is performed using the SQuAD F1 score metric, followed by using $0.5$ threshold to make it binary as suggested in the prior works~\cite{farquhar2024detecting,nguyen2025beyond}.\\
\textbf{Multimodal QA:} For this, the correctness is computed based on agreement with the multiple human annotations provided in the dataset, like prior works~\cite{antol2015vqa, okvqa}.\\
\textbf{Multilingual QA:} We use \texttt{Gemini-2.5-Flash} as a judge for evaluating responses in multilingual QA. The prompt used for the purpose is stated below,}
\begin{tcolorbox}[
    colback=gray!8,
    colframe=gray!55,
    title=\textbf{Prompt for Gemini-2.5-Flash: Multilingual Correctness Judging},
    fonttitle=\bfseries,
    boxrule=0.6pt,
    arc=2pt,
    left=6pt,right=6pt,top=6pt,bottom=6pt
]
\small
\textbf{System instruction:} \\
You are evaluating science questions across multiple languages. Determine if each predicted answer matches the ground truth. Be flexible with extra explanation, but the core answer must be correct.

\vspace{4pt}
\textbf{Output format:}
\begin{quote}
\ttfamily\raggedright
For each language below, decide if the predicted answer is correct (1) or incorrect (0). Return ONLY a JSON object mapping each language code to 1 or 0.

\vspace{4pt}
Example response: \{"en": 1, "zh": 0, "ja": 1, "fr": 0, "th": 1\}
\end{quote}

\textbf{Input template:}
\begin{quote}
\ttfamily\raggedright
=== \{lang\} ===

\vspace{4pt}
Question: \{question\}

\vspace{4pt}
Ground truth: \{ground\_truth\}

\vspace{4pt}
Predicted: \{first\_greedy\_prediction\}
\end{quote}

\textbf{Output instruction:}
\begin{quote}
\ttfamily
Return ONLY the JSON object. No extra text.
\end{quote}
\end{tcolorbox}
\subsection{Area Under the Risk--Coverage Curve}
\label{appendix:aurc-results}

In addition to the abstention AUROC, we evaluate selective-prediction performance using the Area Under the Risk--Coverage Curve (AURC). For each uncertainty estimator, predictions are ordered from the most to the least confident. At each coverage level, the risk is computed as the error rate among the retained predictions. A lower AURC value is better.
Table~\ref{tab:aurc_results} reports AURC averaged across the six text-QA models on SVAMP and TriviaQA. BiG-SURE achieves the lowest AURC on both datasets. These results are consistent with the abstention-AUROC findings.
\begin{table}[t]
    \centering
    \small
    \setlength{\tabcolsep}{5pt}
    \caption{AURC averaged across the six text-QA models. 
    Values are reported as mean $\pm$ standard deviation across models. 
    Lower values indicate better selective-prediction performance.}
    \label{tab:aurc_results}
    \resizebox{0.8\linewidth}{!}{%
    \begin{tabular}{lcc}
        \toprule
        \textbf{Method} 
        & \textbf{SVAMP AURC $\downarrow$}
        & \textbf{TriviaQA AURC $\downarrow$} \\
        \midrule
        DSE      & $0.1097 \pm 0.0441$ & $0.2244 \pm 0.0577$ \\
        SumEigV  & $0.1169 \pm 0.0487$ & $0.2209 \pm 0.0560$ \\
        Deg      & $0.1169 \pm 0.0487$ & $0.2209 \pm 0.0562$ \\
        NumSet   & $0.1104 \pm 0.0439$ & $0.2230 \pm 0.0596$ \\
        LexSim   & $0.1146 \pm 0.0394$ & $\underline{0.2126 \pm 0.0574}$ \\
        KLE      & $0.1212 \pm 0.0398$ & $0.2176 \pm 0.0619$ \\
        SNNE     & $\underline{0.1017 \pm 0.0391}$ & $0.2328 \pm 0.0535$ \\
        \midrule
        \textbf{BiG-SURE}
        & $\mathbf{0.0839 \pm 0.0328}$
        & $\mathbf{0.1975 \pm 0.0520}$ \\
        \bottomrule
    \end{tabular}
    }
\end{table}
\subsection{Failure analysis of SURE}
\label{appendix:failure_analysis}
We analyze representative failure cases in which textual semantic agreement can produce misleading uncertainty estimates. We discuss two distinct failure modes: (i) \textbf{Case I:} The LLM mispredicts and produces different semantically equivalent forms of the same incorrect responses during sampling, and (ii) \textbf{Case II:} The semantic-similarity backend incorrectly identifies non-equivalent responses as equivalent.
\begin{figure}[t]
    \centering
    \includegraphics[width=0.7\columnwidth]{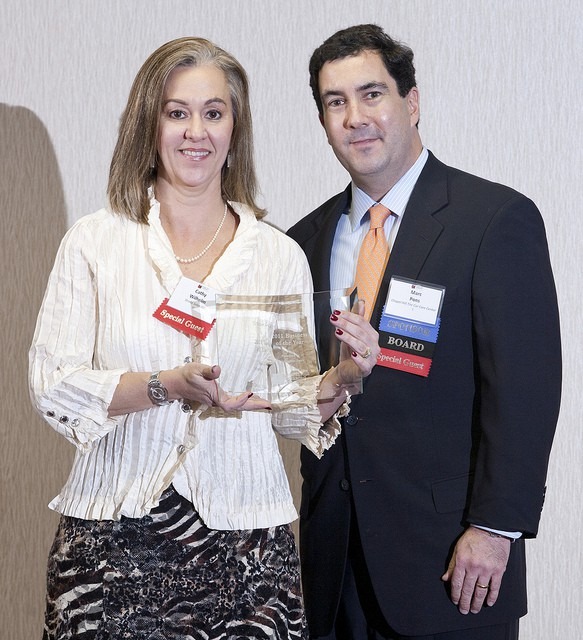}
    \caption{A representative visual-QA failure involving semantically consistent but incorrect responses. For the displayed image, the VLM produces \textit{``Louis Philippe''} and \textit{``Louis Philippe Suits''}, whereas the acceptable reference answers include \textit{Brooks Brothers}, \textit{Giorgio Armani}, and \textit{Gucci}. The high NLI entailment score ($0.9863$) correctly reflects agreement between the generated responses, but cannot determine whether they are visually grounded.}
    \label{fig:consistent_incorrect_vqa}
\end{figure}
\paragraph{Case I:} Figure~\ref{fig:consistent_incorrect_vqa} shows an example from visual QA. For the question \textit{``What brand of suit is the man in the image wearing?''}, the acceptable reference answers include \textit{Brooks Brothers}, \textit{Giorgio Armani}, and \textit{Gucci}. However, the VLM repeatedly generates \textit{``Louis Philippe''} and \textit{``Louis Philippe Suits''}. The NLI backend assigns these responses an entailment score of $0.9863$ and estimates very low uncertainty. This is the case of consistent incorrect predictions.
This should not be interpreted as an NLI-backend error: the two generated responses are semantically equivalent, and therefore the high entailment score is reasonable. Instead, it illustrates a fundamental limitation of response-consistency-based uncertainty estimation. This limitation is especially important in visual QA because the current semantic backend compares only the textual responses and does not examine whether they are grounded in the image.

\paragraph{Case II}
A separate failure mode occurs when the NLI backend assigns high entailment to responses that are not semantically equivalent. For example, for a Chinese SciQ question asking what type of energy is released when iron filings react with air, the responses corresponding to \textit{``heat''} and \textit{``combustion''} receive an entailment score of $0.8453$. Although the concepts are related, combustion is a process rather than the requested form of energy, and the two answers are not interchangeable.
We observe a similar issue for numerical and frequency-based answers. For a Japanese SciQ question asking how often partial lunar eclipses occur, the responses corresponding to \textit{``more than once per month''} and \textit{``multiple times per year''} receive an entailment score of $0.9895$, despite expressing substantially different frequencies.

\noindent The two failure modes have different origins. Semantically consistent but incorrect predictions cannot be resolved by improving textual NLI alone and instead require grounding-aware comparison with the original input, including the image in multimodal settings. Semantic-backend errors may be mitigated through stronger multilingual and numerical entailment models, and calls for improving such models. Although the same semantic backend is used for BiG-SURE and the similarity-based baselines in our experiments, such errors does not guarantee that scorer errors affect all methods equally; as their impact may vary across methods. Developing modality-aware and more reliable semantic-agreement measures therefore remains an important direction for future work.
\subsection{Use of AI assistants}
We used an AI assistant- ChatGPT~\cite{achiam2023gpt} solely as a language-editing aid. Its use was limited to identifying grammatical errors and typographical mistakes and suggesting localized revisions to improve clarity, and fluency. It was not used to generate ideas, formulate hypotheses, or design methods. All technical content, arguments, analyses, and conclusions were developed and verified by the authors, who also reviewed and approved all final edits.

\end{document}